%% file: neurips_2026.tex
\documentclass{article}

\usepackage[preprint]{neurips_2026}

\usepackage[utf8]{inputenc} 
\usepackage[T1]{fontenc}    
\usepackage{hyperref}       
\usepackage{url}            
\usepackage{booktabs}       
\usepackage{amsfonts}       
\usepackage{nicefrac}       
\usepackage{microtype}      
\usepackage{xcolor}         

\usepackage{amsmath}
\usepackage{amssymb}
\usepackage{mathtools}
\usepackage{amsthm}
\usepackage{algorithm}
\usepackage{algorithmic}
\usepackage{subcaption}

\usepackage[capitalize,noabbrev]{cleveref}

\theoremstyle{plain}
\newtheorem{theorem}{Theorem}[section]

\newtheorem{lemma}[theorem]{Lemma}

\theoremstyle{definition}
\newtheorem{definition}[theorem]{Definition}

\theoremstyle{remark}

\usepackage[textsize=tiny, disable]{todonotes}

\title{BOIL: Learning Environment Personalized Information}

\author{%
  Rohan Patil \quad\quad Henrik I. Christensen\\
  Department of Computer Science and Engineering\\
  University of California San Diego\\
  San Diego, CA 92093 \\
  \texttt{\{rpatil, hichristensen\}@ucsd.edu} \\
}

\begin{document}

\maketitle

\begin{abstract}
Navigating complex environments poses challenges for multi-agent systems, requiring efficient extraction of insights from limited information. In this paper, we introduce the Blackbox Oracle Information Learning (BOIL) process, a scalable solution for extracting valuable insights from the environment structure. Leveraging the Pagerank algorithm and common information maximization, BOIL facilitates the extraction of information to guide long-term agent behavior applicable to problems such as coverage, patrolling, and stochastic reachability. Through experiments, we demonstrate the efficacy of BOIL in generating strategy distributions conducive to improved performance over extended time horizons, surpassing heuristic approaches in complex environments. \url{https://anonymous.4open.science/r/boil-DFF0}
\end{abstract}

\input{sections/introduction}
\input{sections/related}

\input{sections/preliminaries}
\input{sections/boil}
\input{sections/results}
\input{sections/discussion}

\bibliographystyle{icml2026}
\bibliography{references}

\newpage
\appendix
\input{appendix/probability_sandwich}    
\input{appendix/finegrained_estimator}
\input{appendix/large_environment}


\end{document}

%% file: sections/introduction.tex
\section{Introduction}
\label{sec:introduction}

Since the 1980s, the concept of employing multiple interacting agents for various tasks has garnered attention in the fields of robotics, artificial intelligence, and distributed systems. Previous research endeavors have demonstrated successful methods for utilizing multiple robots to tackle diverse challenges such as area coverage, patrolling, and reachability
~\cite{bierkens2016non,clempner2018continuous,diaz2023distributed,hespanha1999multiple,langley2021heterogeneous,rahili2017distributed,stern2006multiagent}. These efforts embraced a multitude of approaches spanning game theory, genetic algorithms, greedy heuristics, and more, reflecting the multifaceted nature of the problems.

Despite this breadth of exploration, a fundamental trade-off persists between achieving optimal solutions, maintaining computational tractability, and ensuring scalability in the number of agents or the size of the environment. Some methodologies adopt an independence assumption among agents to facilitate tractability for larger teams
\cite{diaz2023distributed,langley2021heterogeneous},  sacrificing potential optimality for scalability. The main objective here 
is fine-grained control of these trade-offs. 
We operate under the assumption of having access to an oracle whose information is indirectly accessible, and we aim to devise a computationally scalable approach to extract insights from this oracle. Our focus lies in demonstrating the feasibility of extracting information from an oracle whose behavior adapts to environmental changes. Leveraging the Pagerank algorithm \cite{page1998pagerank}, renowned for its computational efficiency, we propose a scalable method for extracting information from the oracle. Crucially, our approach remains independent of the number of agents involved, and we formulate the problem as a common information maximization task~\footnote{There are multiple ways to define common information. We use the definition given by~\citet{liu2010common}.}.


We begin by providing a comprehensive review of existing literature concerning the aforementioned tasks, followed by an introduction to Non-reversible Markov chains and Supervised PageRank techniques. Subsequently, we illustrate the construction of our proposed process within the context of the coverage problem, termed BOIL (Blackbox Oracle Information Learning), emphasizing its ability to distill information from the oracle into learnable parameters. Many real world problems like mobile sensor coverage~\cite{rahili2017distributed}, forest fire detection~\cite{alsammak2022use}, agricultural monitoring~\cite{albani2017monitoring}, traffic data collection~\cite{elloumi2018monitoring}, etc. can be modeled using coverage. In many real-world scenarios, the number of agents $n$ is significantly smaller than what is required for static coverage. Specifically, we operate in the sparse-agent regime, where the collective sensing footprint of all agents at any time $t$ covers only a fraction of the environment. In such settings, static equilibrium methods are physically incapable of solving the problem, necessitating a long-term dynamic strategy that optimizes the time-averaged visit frequency across the domain. With the foundational structure of BOIL established, we present results that demonstrate its efficacy, particularly in scenarios where a higher number of parameters is computationally feasible. Furthermore, we discuss how our method can be used to address patrolling and reachability tasks. Specifically, BOIL is designed for static or slowly changing environments, such as infrastructure inspection, environmental monitoring, and warehouse patrolling. While handling rapidly time-varying or adversarial environments is an important future direction, it is orthogonal to our current focus on learning environment-level information structure, allowing us to prioritize computational efficiency and accessibility in the optimization process.

The primary contribution lies in showcasing the utility of extracted information from a blackbox oracle, offering theoretical insights and practical implications for enhancing the performance and robustness of multi-agent systems in complex environments. We run simulated experiments to support the claims about the effectiveness of the process.

%% file: sections/related.tex
\section{Related Works}
\label{sec:related_works}


%


Several approaches have been proposed in the literature to address the challenges of coverage, patrolling, and reachability in multi-agent systems. \citet{stern2006multiagent} introduced a genetic algorithm-based technique for coverage, optimizing a complex fitness function dependent on factors such as distance, travel time, and visibility within a discretized space. While providing complete trajectories, this approach becomes computationally intractable for large environments or a high number of agents due to its dependency on the number of agents. \citet{rahili2017distributed} presented a game-theoretic framework for distributed coverage of mobile sensors, aiming to find equilibrium positions for sensors using reinforcement learning to converge to Nash equilibrium. This approach incorporates utility functions for agents directly into transition probabilities, assuming reversible agent movements~\footnote{Reversible movements implies that if an agent can take action $a$ to transition from state $s$ to $s'$, then there must exist an action $a'$ such that the agent can transition from $s'$ back to $s$.} and focusing on reaching equilibrium via agent interactions. Our work generalizes to non-reversible agent movements. In both works, the planning and control of agents is tightly coupled making it hard to tweak individual components separately. In this work, we provide a way to loosely couple planning and control allowing a fine-grained ability to deal with various trade-offs.

\citet{mathew2011metrics}~showed that it is possible to create control inputs for a group of agents based on a probabilistic high level plan. Leveraging Ergodic control, they addressed the coverage problem with a given distribution for space coverage, defining first and second-order dynamics of agents. Their work shows that we can create a loose coupling between planning and control, the main focus is on control. In contrast, our work focuses on the planning part of the problem. \citet{abraham2018decentralized} demonstrated a decentralized ergodic control method for coverage, focusing on achieving a given target distribution. They claim that their work can be extended to pursuit-evasion games but do not provide a mathematical formulation. Note that ergodic control methods~\cite{mathew2011metrics,abraham2018decentralized} fundamentally require a target spatial distribution as prior input. They solve the `tracking' problem (how to move to match a distribution), whereas BOIL solves the `generation' problem (it generates the target distribution). Consequently, we do not compare quantitatively against ergodic controllers, as BOIL provides the necessary input they require to function in these environments. 

Stochastic reachability, akin to the work of \citet{hespanha1999multiple}, pertains to solving agent trajectories to maximize the probability of reaching specific locations within a fixed time. We discuss how it is possible to provide a mathematical formulation for stochastic reachability with minor changes. Patrolling tasks present distinct challenges, where the number of objective points may be significantly fewer than the total environment size. Game-theoretic approaches, such as Stackelberg games \cite{clempner2018continuous, gan2018stackelberg} and Nash equilibrium strategies \cite{langley2021heterogeneous}, have been widely adopted. Additionally, constraint-based methods \cite{diaz2023distributed} have been explored, offering scalable solutions for multi-agent patrolling.

%% file: sections/preliminaries.tex
\section{Preliminaries}
\label{sec:preliminaries}

\label{subsec:nonreversible_markov_chain}

\textbf{Non-Reversible Markov Chain.} A Markov chain over state space $S$ has a stationary distribution $\pi$ satisfying the 
\textit{global balance} condition $\sum_{u \in S} \pi(u)P(u\!\rightarrow\! v) = \pi(v)$, 
which is weaker than \textit{detailed balance} $\pi(u)P(u\!\rightarrow\! v) = 
\pi(v)P(v\!\rightarrow\! u)$. Chains satisfying only global balance are non-reversible. 
While Metropolis-Hastings (MH)~\citep{metropolis1953equation, hastings1970monte} provides 
sampling guarantees for reversible chains, \citet{bierkens2016non} extends this to the 
non-reversible case via the Hastings ratio:
\begin{equation}
    \label{eq:non_rever_ratio}
    R_{\Gamma}(u,v) := \begin{cases}
        \frac{\Gamma(u,v) + \pi(v)Q(v,u)}{\pi(u)Q(u,v)} & \pi(u)Q(u,v) \neq 0\\
        1 & \text{otherwise}
    \end{cases}
\end{equation}
where $\Gamma(u,v) := \pi(u)P(u \rightarrow v) - \pi(v)P(v \rightarrow u)$, subject to 
$-\pi(v)Q(v,u) \le \Gamma(u,v) \le \pi(u)Q(u,v)$ to ensure non-negativity.

\label{subsec:supervised_pagerank}

\textbf{Supervised PageRank.} PageRank~\citep{page1998pagerank} models web navigation as a Markov chain, ranking pages 
via the stationary distribution $\pi(v) = \alpha(v) + c\sum_{\{u|(u,v)\in E\}} 
\frac{\pi(u)}{|\{w|(u,w)\in E\}|}$, where $\alpha$ ensures irreducibility. 
\citet{zhukovisky2014pagerank} extends this to \textit{Supervised PageRank}, learning 
personalized transition probabilities via functions $f: X \times V \rightarrow \mathbb{R}$ 
and $g: Y \times E \rightarrow \mathbb{R}$. We build on the gradient-free variant of 
this framework~\citep{NIPS2016_1f34004e}.

%% file: sections/boil.tex
\section{Blackbox Oracle Information Learning (BOIL)}
\label{sec:boil}
In this section, we will begin by formulating the problem environment. Subsequently, we will demonstrate how information is extracted from the environment structure to address the coverage problem utilizing movement constraints (Flow constraints) enforced by the environment.  In the coverage task, we consider a node is covered if it comes in visibility of any agent, and the agents want to cover all nodes equally as many times as possible.\footnote{In this work, we make a distinction between patrolling the area and patrolling specific areas in the environment. We call the first one coverage and the latter patrolling.} Note that throughout this work, we assume homogeneity among agents, meaning they possess identical physical capabilities unless stated otherwise. Proofs are deferred to the Appendix.

    \subsection{Coverage Problem Formulation}
    \label{subsec:coverage_formulation}
    
    Similar to prior research, we represent the environment with a graph. However, a key distinction lies in our utilization of an undirected graph $\mathcal{G}(V, E)$ to encode the topography or environment semantics alongside a directed graph $\mathcal{G}_d(V_d, E_d)$ to model the potential movement space of the robots/agents. Such a formulation allows a rich representation for real world settings like cases when agents can go down a hill but cannot come up the same way because of physical constraints or model a trapdoor like situation that arise in case of fire exits and one-ways in urban settings. It's important to note that all vertices $v \in V_d$ possess self-loops, denoted by $(v, v) \in E_d\,\, \forall \, v\in V_d$, indicating that $\mathcal{G}_d$ is not a simple directed graph.\footnote{Directed graphs without self-loops are referred to as simple directed graphs.} Additionally, we assume that $\mathcal{G}_d$ is strongly connected. This assumption is made to prevent scenarios where the agent reaches states from which it cannot return to its initial state or where certain parts of the graph are unreachable regardless of the number of steps taken. Furthermore, we assume that the traversal time for any edge in $E_d$ is independent of the agent's previous actions.
    
    \citet{stern2006multiagent}~utilizes a visibility matrix as a map listing nodes visible from a particular node. This matrix formally defines a fitness function for the genetic algorithm in prior research. Here, we define visibility as a function $V_s: E_d \rightarrow [0,1]^{|V|}$, where $V_s((u,v))(w)$ represents the value for node $w \in V$ when $V_s$ is assessed on $(u,v) \in E_d$ which allows us to define unidirectional vision for agents. This value gives the probability that node $w$ is visible when an agent crosses edge $(u,v)$. 

    \begin{definition}
        Let $V_s((\Bar{u}, \dots, \Bar{v}))$ denote the visibility for a path $(\Bar{u}, w_1, w_2, \dots, w_{l-1}, \Bar{v})$, where $T_{\Bar{u}\Bar{v}}$ denotes the time required for the agent to traverse the path. Let $\mathcal{V}_w(t): [0, T_{\Bar{u}\Bar{v}}] \rightarrow \{0,1\}$ be an indicator function, which is 1 if the node $w$ is visible at time $t$. Then,
            $$\forall w \in V, \,\, V_s((\Bar{u}, \dots, \Bar{v}))(w) = \frac{1}{T_{\Bar{u}\Bar{v}}}\int_0^{T_{\Bar{u}\Bar{v}}} \mathcal{V}_w(t) dt $$
    \end{definition}



    Consider a system with $n$ agents. Let the function $h_{(u,v)}^i(t)$ be $1$ for all $t$ when agent $i$ is crossing the edge $(u,v)$, and otherwise $0$. These functions encode the \textit{optimal} trajectory information for the particular environment. We postulate a theoretical oracle $\mathcal{O}$ that can generate these functions. Crucially, $\mathcal{O}$ serves solely as an existence proof for the trajectory information. We show that it is possible to obtain information about these trajectories without actually querying the oracle. Suppose we make a vector $Y_t$, which represents the probability that a node in $V$ is visible at time $t$ from any agent. Then we can write $Y_t$ as follows:

    \begin{equation}
        \label{eq:visibility_time}
        Y_t = \vec{1}-\prod_{i=1}^n \left(\vec{1}-\sum_{(u,v) \in E_d}h_{(u,v)}^i(t)V_s((u,v))\right)
    \end{equation}

    where the product of vectors represents the element-wise multiplication. For the rest of the work, consider that a direct vector of products represents the element-wise product unless otherwise mentioned. We can define the node visibility probability for the time duration $[0, T]$ as $P_V(w) = \left(\int_0^T Y_t(w)dt\right)/T$. The coverage problem can now be defined as maximizing the common information~\cite{liu2010common} for random variables that represent the node visibility. We use Theorem \ref{theorem:probability_distribution} given below to find a bound on the integral and reduce the problem to minimizing the loss function $\mathcal{L}$ as follows:
    \begin{equation}
        \label{eq:loss_function_visibility}
        \begin{split}
            \mathcal{L} = \sum_{w \in V} -\mathcal{A}(w)\log\mathcal{A}(w)\,\,\text{where}\,
            \mathcal{A}(w) = \sum_{(u,v)\in E_d} P((u,v))V_s((u,v))(w)            
        \end{split}
    \end{equation} 
    The details of the reduction are deferred to the appendix along with proof of Theorem~\ref{theorem:probability_distribution}. The proof also covers why the loss is independent of the agent count.

    \begin{theorem}
        \label{theorem:probability_distribution}
        Suppose we define,
        $$P((u,v)) = \frac{1}{nT}\sum_{i=1}^n\int_0^T h_{(u,v)}^i(t) dt \,\text{ and } \, \mathcal{X}^i(t) = \sum_{(u,v) \in E_d}V_s((u,v))(w)h^i_{(u,v)}(t)$$
        then $P$ is a probability distribution on the edges $E_d$ and the following holds $\forall w \in V$:
        \begin{equation}
            \begin{split}
                nT\sum_{(u,v) \in E_d} P((u,v))V_s((u,v))(w) \ge \int_0^T Y_t(w) dt \ge
                T\sum_{(u,v) \in E_d} P((u,v))V_s((u,v))(w)\\
                  nT\sum_{(u,v) \in E_d}P((u,v))V_s((u,v))(w)
                  \le
                  \sum_{
                      1\le i<j \le n
                  } \int_0^T \mathcal{X}^i(t)\mathcal{X}^j(t)dt + \int_0^T Y_t(w) dt      
            \end{split}
        \end{equation}
        
    \end{theorem}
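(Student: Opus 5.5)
The plan rests on one structural fact: at every time $t\in[0,T]$, each agent $i$ is traversing exactly one edge of $E_d$. Self-loops let an agent that is "waiting" at $v$ be treated as crossing $(v,v)$, and edge traversals tile the time axis, so $\sum_{(u,v)\in E_d} h^i_{(u,v)}(t)=1$ for almost every $t$. I would state this as a modelling convention before starting.

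\textbf{Step 1 ($P$ is a distribution).} Each $h^i_{(u,v)}\ge 0$, so $P\ge 0$. Summing over edges, exchanging the finite sum with the integral, and using the tiling identity gives
\[
\sum_{(u,v)} P((u,v))=\frac{1}{nT}\sum_{i=1}^n\int_0^T 1\,dt=1 .
\]

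\textbf{Step 2 (pointwise structure).} Fix $w$ and write $x_i=\mathcal{X}^i(t)$ (evaluated at $w$). Since exactly one $h^i$ equals $1$ at time $t$, $x_i=V_s(e)(w)$ for the edge $e$ being crossed, so $x_i\in[0,1]$. Moreover $Y_t(w)=1-\prod_{i=1}^n(1-x_i)$. Integrating the $x_i$ and using the definition of $P$ gives the identity
\[
\sum_{i=1}^n\int_0^T \mathcal{X}^i(t)\,dt \;=\; nT\sum_{(u,v)\in E_d} P((u,v))\,V_s((u,v))(w).
\]
Every bound therefore reduces to a pointwise inequality on $x_1,\dots,x_n\in[0,1]$, which I then integrate over $[0,T]$.

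\textbf{Step 3 (the three pointwise inequalities).} For $x_i\in[0,1]$:
\begin{itemize}
\item[(a)] Union bound: $1-\prod_i(1-x_i)\le \sum_i x_i$. This follows by induction from $1-(1-a)(1-x)=a+x-ax\le a+x$ for $a,x\in[0,1]$.
\item[(b)] Max bound: $1-\prod_i(1-x_i)\ge \max_i x_i\ge \frac1n\sum_i x_i$. This holds because $\prod_i(1-x_i)\le 1-x_j$ for every $j$.
\item[(c)] Second-order Bonferroni: $1-\prod_i(1-x_i)\ge \sum_i x_i-\sum_{i<j}x_ix_j$.
\end{itemize}

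\textbf{Step 4 (integration).} Integrating (a) gives the upper bound $nT\sum P V_s\ge\int_0^T Y_t(w)\,dt$. Integrating (b) gives $\int_0^T Y_t(w)\,dt\ge \frac1n\cdot nT\sum P V_s=T\sum P V_s$. Integrating (c) and rearranging gives the last line of the theorem.

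\textbf{Main obstacle: inequality (c).} I would prove it by induction on $n$. Set $S_k=\sum_{i\le k}x_i$, $Q_k=\sum_{i<j\le k}x_ix_j$, $\Pi_k=\prod_{i\le k}(1-x_i)$, and use the hypothesis $1-\Pi_k\ge S_k-Q_k$. Then
\[
1-\Pi_{k+1}=(1-\Pi_k)+x_{k+1}\Pi_k .
\]
What remains is to show $x_{k+1}\Pi_k\ge x_{k+1}(1-S_k)$, i.e.\ $\Pi_k\ge 1-S_k$, which is exactly (a) again. Adding the two inequalities gives
\[
1-\Pi_{k+1}\ge S_k-Q_k+x_{k+1}-x_{k+1}S_k=S_{k+1}-Q_{k+1}.
\]
The only care needed is this ordering: (a) must be proved first and then reused inside the induction for (c). Measurability and integrability are immediate because all integrands are bounded on a finite interval.
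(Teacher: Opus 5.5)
Your proof is correct and follows the paper's skeleton: the identity $\sum_i\int_0^T\mathcal{X}^i(t)\,dt=nT\sum_{(u,v)\in E_d}P((u,v))V_s((u,v))(w)$, pointwise bounds on $Y_t(w)=1-\prod_i(1-\mathcal{X}^i(t))$, and then integration over $[0,T]$. Your union-bound induction (a) is the paper's argument; the differences are local.

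\textbf{The $\frac1n$ lower bound.} For $Y_t(w)\ge\frac1n\sum_i\mathcal{X}^i(t)$, the paper applies AM--GM to the factors $1-\mathcal{X}^i(t)$. This gives $\prod_i(1-x_i)\le(1-\bar{x})^n\le 1-\bar{x}$, where $\bar{x}$ is the mean. Your bound $\prod_i(1-x_i)\le 1-x_j$ for each $j$ is more elementary. It also yields the intermediate $Y_t(w)\ge\max_i\mathcal{X}^i(t)$ along the way, although both routes give the same factor $T$ after integration.

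\textbf{The Bonferroni bound (c).} The paper multiplies its inductive hypothesis $\Pi_n\le 1-S_n+Q_n$ by $1-x_{n+1}\ge 0$ and discards the nonnegative term $x_{n+1}Q_n$. So it never needs (a) inside that induction. Your split $1-\Pi_{k+1}=(1-\Pi_k)+x_{k+1}\Pi_k$ reaches exactly $S_{k+1}-Q_{k+1}$ with no slack, at the price of the ordering you note.

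\textbf{Gaps you fill.} Your Step~1 and your explicit convention that each agent is on exactly one edge at almost every time fill two small gaps in the paper. The paper's proof of this theorem never sums $P$ over $E_d$. It also infers $\sum_{(u,v)}h^i_{(u,v)}(t)=1$ from ``only one of them can be $1$,'' which by itself gives only $\le 1$.
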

    Theorem \ref{theorem:probability_distribution} shows that minimizing Equation \ref{eq:loss_function_visibility} increases a lower bound on the common information between environment states and agent visitation events. The bound is not intended to be tight in a constant-factor sense. Instead, it establishes that improvements in the optimization objective correspond monotonically to increased shared information, thus serving as a principled surrogate objective. While tighter bounds could be achieved via more complex spectral analysis, our priority is maintaining linear $O(|E|)$ scalability. The current bound suffices to drive the gradient-free optimization towards improved coverage without incurring cubic computational costs associated with exact eigen-decomposition.


    
    \subsection{Solving as Flow Constraint Problem}
    \label{subsec:flow_constraint}
    Any agent can only cross the edge $(u,v) \in E_d$ only if the agent is on the node $u \in V_d$. Moreover, the agent can stay at a node $u$ only if it has reached node $u$. All the agents in the system are constrained by the structure of the directed graph $\mathcal{G}_d$. Following the above argument, we can decompose $P((u,v))$ into two probabilities change (shown in Theorem \ref{theorem:node_probability}): 1) Probability that the agent is on node $u$ represented as $\pi(u)$ and 2) probability that the agent moves along the edge $(u,v)$ given that the agent is on the node $u$ represented as $P(u \rightarrow v)$. Hence,

    \begin{equation}
        \label{eq:probability_decomposition}
        \begin{split}
            \implies &P((u,v)) = \pi(u)P(u \rightarrow v) \,\,\text{ where }\, \sum_{\{v|(u,v)\in E_d\}} P(u \rightarrow v) = 1            
        \end{split}
    \end{equation}


    \begin{theorem}        
        \label{theorem:node_probability}
        The function $\pi:V_d \rightarrow \mathbb{R}$ is a probability distribution on the nodes in $V_d$. 
    \end{theorem}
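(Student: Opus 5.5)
Theorem~\ref{theorem:node_probability} asks us to show that $\pi$ is a probability distribution on $V_d$. The plan is to define $\pi$ directly from the oracle trajectories, in the same spirit as $P((u,v))$ in Theorem~\ref{theorem:probability_distribution}. We then show that this definition is consistent with the decomposition in Equation~\ref{eq:probability_decomposition}. Concretely, we set
\[
\pi(u) := \sum_{\{v \mid (u,v)\in E_d\}} P((u,v)) = \frac{1}{nT}\sum_{i=1}^n \int_0^T \sum_{\{v \mid (u,v)\in E_d\}} h^i_{(u,v)}(t)\,dt .
\]
This is the fraction of agent-time spent on edges leaving $u$. Staying at $u$ counts here through the self-loop $(u,u)\in E_d$. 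The intended reading is that the agent is ``on node $u$'' exactly while it is crossing some out-edge of $u$.

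Given this definition, we then set $P(u\rightarrow v) := P((u,v))/\pi(u)$ whenever $\pi(u)>0$. When $\pi(u)=0$, we instead choose an arbitrary distribution over the out-edges of $u$; the self-loop guarantees at least one such edge exists. With these choices, $P((u,v))=\pi(u)P(u\rightarrow v)$ and $\sum_v P(u\rightarrow v)=1$ hold by construction.

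Non-negativity of $\pi$ is immediate, since each $h^i_{(u,v)}\ge 0$. For normalization we use the fact that the out-edge sets $\{(u,v): v\}$, as $u$ ranges over $V_d$, partition $E_d$. Hence
\[
\sum_{u\in V_d}\pi(u)=\sum_{(u,v)\in E_d}P((u,v)) = 1,
\]
where the last equality is the first claim of Theorem~\ref{theorem:probability_distribution}.

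If we prefer not to rely on that theorem, we can verify the identity directly. At each time $t$, every agent $i$ is crossing exactly one edge, so $\sum_{(u,v)\in E_d} h^i_{(u,v)}(t)=1$. Integrating over $[0,T]$ and summing over the $n$ agents gives $nT$, which cancels the normalizing factor $\frac{1}{nT}$.

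The main obstacle is justifying this per-time partition property of the $h^i$. It holds provided the oracle trajectories are modeled so that each agent always occupies exactly one edge, with waiting represented by the self-loop, and with the measure-zero set of edge-switching instants ignored. I will state this as the modeling convention implicit in the definition of $h^i_{(u,v)}$ and in Theorem~\ref{theorem:probability_distribution}.

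A secondary point is that this $\pi$ should agree with the flow interpretation, i.e.\ $\pi$ satisfies global balance up to boundary terms of order $1/T$. This is not needed to show that $\pi$ is a distribution, so I would only remark on it: every entry into $u$ is followed by an exit from $u$, so in-flow and out-flow counts differ by at most one per agent.
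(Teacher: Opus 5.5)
Your argument is correct and essentially matches the paper's. The paper also groups the edges of $E_d$ by source node and combines $\sum_{\{v \mid (u,v)\in E_d\}} P(u\rightarrow v)=1$ with the normalization of $P$ from Theorem~\ref{theorem:probability_distribution} to get $\sum_{u\in V_d}\pi(u)=1$; you additionally note that the decomposition forces $\pi(u)=\sum_{v}P((u,v))$, check non-negativity, and handle $\pi(u)=0$, which the paper leaves implicit. Your closing side remark is slightly overstated: since $P$ and $\pi$ are time-weighted, the entry/exit counting argument gives global balance up to $O(1/T)$ only when all edges have equal traversal times, but this remark plays no role in the statement.
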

 
    Furthermore, the net flow of the probability should be balanced across the graph, implying that $P(u \rightarrow v)$ for all $(u,v) \in E_d$ and $\pi(u)$ for all $u \in V_d$ have values such that the global balanced condition (Equation \ref{eq:global_balanced_condition}) is satisfied. Hence, the movement can be modeled as a Markov chain with the nodes $V_d$ as the state space. However, we do not know the transition probabilities $P(u \rightarrow v)$.

    \begin{equation}
        \label{eq:global_balanced_condition}
        \sum_{u \in V_d} \pi(u)P(u\!\rightarrow\! v) = \pi(v)
    \end{equation}

    Previous works have studied~\cite{mathew2011metrics, miller2015ergodic, abraham2018decentralized} how multiple agents can approximate a given distribution defined similarly to the definition of $P((u,v))$ as given in theorem \ref{theorem:probability_distribution}. Please observe that the loss function \ref{eq:loss_function_visibility} only depends on the transition probabilities $P((u,v))$, implying that we need to only find the correct values for $P(u \rightarrow v)$ that minimizes the loss function. Hence, we model the problem as a Supervised PageRank (\cref{subsec:supervised_pagerank}) optimization problem, and get Algorithm \ref{alg:boil} by building on \citet{NIPS2016_1f34004e}.

    Note how we have only constrained the flow of probability, as Pagerank only ensures the global balanced condition (\cref{eq:global_balanced_condition}). The oracle is supposed to provide continuous paths, but we solved for a softer probabilistic constraint. Hence, running Algorithm \ref{alg:boil} allows us to access some information from the oracle. While previous work does not require an oracle, it uses hard constraints, thereby restricting custom control over the design trade-offs. Using the oracle formulation allows us to perform \textit{fine-grained estimation}. The parameters are log-probabilities for stability.

    \begin{algorithm}[h]
      \caption{BOIL Algorithm}
      \label{alg:boil}
      \begin{algorithmic}
        \STATE {\bfseries Input:} step size $\mu > 0$, number of steps $N \ge 1$, initial transition vector $\vec{p}_0$ which represents $\ln P(u \rightarrow v)\,\, \forall (u,v) \in E_d$, directed graph $\mathcal{G}_d$, loss function $\mathcal{L}$
        \STATE {\bfseries Output:} optimized transition vector $\vec{p}_k$ and PageRank vector $\vec{x}_k$
    
        \STATE $k \gets 0$, $m \gets |\vec{p}_0|$, $\vec{x}_0 \gets \mathrm{PAGERANK}(\vec{p}_0, \mathcal{G}_d)$   \COMMENT{$|\cdot|$ gives the dimension of the vector.}  
        \WHILE{$k < N$}
            \STATE $\vec{r} \gets$ random vector on the $m$-dimensional unit sphere
            \STATE $\vec{q} \gets$ normalized $(\vec{p}_k + \vec{r})$ such that $\sum_{\{v \mid (u,v) \in E_d\}} P(u \rightarrow v) = 1$
            \STATE $\vec{y} \gets \mathrm{PAGERANK}(\vec{q}, \mathcal{G}_d)$

            \STATE $\vec{g} \gets m \big( \mathcal{L}(\vec{q}, \vec{y}) - \mathcal{L}(\vec{p}_k, \vec{x}_k) \big)\vec{r}$
            \STATE $\vec{p}_{k+1} \gets$ normalize $(\vec{p}_k - \mu \vec{g})$ such that $\sum_{\{v \mid (u,v) \in E_d\}} P(u \rightarrow v) = 1$
            \STATE $\vec{x}_{k+1} \gets \mathrm{PAGERANK}(\vec{p}_{k+1}, \mathcal{G}_d)$
            \STATE $k \gets k + 1$
        \ENDWHILE
    
        \STATE $k^\star \gets \arg\min_{k \in \{0,\dots,N-1\}} \mathcal{L}(\vec{p}_k, \vec{x}_k)$
        \STATE {\bfseries Return:} $\vec{p}_{k^\star}, \vec{x}_{k^\star}$
      \end{algorithmic}
    \end{algorithm}

    The complexity of pagerank is $O(|E|)$, making the overall complexity $O(NK|E| + N \Omega)$, where $N$ is the number of iterations for gradient free optimization; $\Omega = \sum_{(u,v,) \in E_d} \sum_{w \in V_d} \delta_{V((u,v))(w) > 0}$ and $K$ the maximum iterations for pagerank. $\Omega$ measures how many calculations contribute to the loss function.
            
    \subsection{Fine Grained Estimation}
    \label{subsec:fine_grained_estimation}
    The estimator that we used in the coverage problem employs a distribution over the node space. However, the parameters required to solve for the node space might be suboptimal for the available compute. Furthermore, increasing the resolution of the graph in the node space can lead to an increase in the parameter space larger than the available compute. Moreover, it might be favorable to obtain additional information about the movement of the agents along the time axis in place of spatial information. Observe how the presented technique only requires a strongly connected state space, and the visibility function depends only on the state and not on the time the state is reached. Now, we show various methods to increase the state space, which gives more information about the system with a relatively low increase in the parameter space.

    Consider the coverage problem. Suppose we are given a continuous path $(\Bar{u}, w_1, w_2, \dots, w_{l-1}, \Bar{v})$ and we want to find out the probability the agent should take this path to minimize the loss.

    \begin{theorem}
    \label{theorem:close_loop}
        Given a path $(\Bar{u}, w_1, w_2, \dots, w_{l-1}, \Bar{v})$, the edge set $E_d$ can be modified to add an additional edge between the node $\Bar{u}$ and $\Bar{v}$ represented as $(\Bar{u}, \dots, \Bar{v})$. Optimizing the loss function $\mathcal{L}$ on the original edge set $E_d$ is the same as optimizing the loss function over the modified edge set. Furthermore, the probability that the agent takes the path is the probability for the edge $(\Bar{u}, \dots, \Bar{v})$ found by optimizing the loss on the modified edge set.
    \end{theorem}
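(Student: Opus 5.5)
The plan is to show that the path edge $e^\star := (\Bar{u}, \dots, \Bar{v})$ is a ``macro-edge'' that the oracle formulation already accommodates, so adding it changes neither the feasible set of edge-distributions nor the loss $\mathcal{L}$. We first define the visibility of $e^\star$ by the path-visibility definition, $V_s(e^\star)(w) = \frac{1}{T_{\Bar{u}\Bar{v}}}\int_0^{T_{\Bar{u}\Bar{v}}}\mathcal{V}_w(t)\,dt$. Then we split that integral over the sub-intervals in which the agent traverses the consecutive edges $(\Bar{u},w_1),(w_1,w_2),\dots,(w_{l-1},\Bar{v})$. Because edge traversal times are independent of the agent's history (an assumption stated in the coverage formulation), each piece equals $T_{e}V_s(e)(w)$. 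This gives the decomposition
\begin{equation*}
V_s(e^\star)(w) = \sum_{e \in \text{path}} \frac{T_e}{T_{\Bar{u}\Bar{v}}}\, V_s(e)(w), \qquad T_{\Bar{u}\Bar{v}} = \sum_{e\in\text{path}} T_e .
\end{equation*}

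Next we compare the two problems through the oracle functions $h^i$. On the modified edge set, an oracle trajectory that uses $e^\star$ during some time interval can be rewritten as a trajectory on $E_d$: replace $h^i_{e^\star}$ on that interval by the indicators of the constituent edges on their sub-intervals. This leaves $\mathcal{X}^i(t)$, and hence $Y_t$, unchanged pointwise. By the decomposition above, the mass $P(e^\star)$ splits into extra mass $\frac{T_e}{T_{\Bar{u}\Bar{v}}}P(e^\star)$ on each constituent edge $e$, and $\mathcal{A}(w)=\sum_e P(e)V_s(e)(w)$ is unchanged. Conversely, every trajectory on $E_d$ is already a trajectory on the enlarged set with $h_{e^\star}\equiv 0$. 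Thus we have a surjective, loss-preserving map from feasible $P$ on the enlarged set onto feasible $P$ on $E_d$, and the infimum of $\mathcal{L}$ is the same in both problems. We also need to check that the global balance condition is preserved. The macro-edge carries flow from $\Bar{u}$ to $\Bar{v}$, exactly as the chain of edges does, so the intermediate nodes $w_k$ receive and emit equal extra flow. Together with Theorem~\ref{theorem:probability_distribution} and Theorem~\ref{theorem:node_probability}, the images are still valid distributions satisfying the balance equation. Strong connectivity and self-loops are unaffected, since we only add an edge.

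For the last claim, recall that by the definition of $P$ in Theorem~\ref{theorem:probability_distribution}, $P(e^\star)$ is the time-fraction spent on $e^\star$. Taking the path means precisely traversing $e^\star$ in the lifted trajectory. Hence the optimizer's value $\pi(\Bar{u})P(\Bar{u}\rightarrow e^\star)$, normalized via Equation~\ref{eq:probability_decomposition}, is the probability that an agent at $\Bar{u}$ commits to the path.

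I expect the main obstacle to be the normalization by time. $P$ is a time-average, so the macro-edge's mass must be redistributed with weights $T_e/T_{\Bar{u}\Bar{v}}$ and not uniformly. The rate at which the path is initiated then differs from $P(e^\star)$ by the factor $T_{\Bar{u}\Bar{v}}$. I would handle this by interpreting ``the probability that the agent takes the path'' as the time-fraction quantity $P(e^\star)$, consistent with how $P$ is defined throughout. A second concern is non-uniqueness: the modified problem may have several optimizers with different splits between $e^\star$ and the constituent edges. I would therefore state the claim for whichever optimizer is returned.
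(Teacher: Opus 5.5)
Your main line is the paper's proof. The time-weighted decomposition of $V_s(e^\star)$ is Lemma~\ref{lemma:visibility_linearity}. The redistribution $P(e)=\Bar{P}(e)+\frac{T_e}{T_{\Bar{u}\Bar{v}}}\Bar{P}(e^\star)$ is the identity the paper derives through the indicators $g^i_e$. Invariance of the linear quantity $\mathcal{A}(w)$ then gives invariance of $\mathcal{L}$, and your surjectivity remark makes the ``same optimum'' conclusion slightly more explicit than the paper does.

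Two auxiliary steps you add are false, however. The minor one is that $\mathcal{X}^i(t)$ is not unchanged pointwise. While agent $i$ is on $e^\star$ it is the constant $V_s(e^\star)(w)$; after rewriting it equals $V_s(e)(w)$ on each constituent edge's sub-interval. Only $\int \mathcal{X}^i$ is preserved, and because $Y_t$ contains cross-agent products $\mathcal{X}^i\mathcal{X}^j$, even $\int_0^T Y_t(w)\,dt$ can change. This costs you nothing only because $\mathcal{A}(w)=\frac{1}{nT}\sum_i\int_0^T\mathcal{X}^i(t)\,dt$ is linear in the $\mathcal{X}^i$, so drop the pointwise claim.

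The real problem is the balance check. Equation~\ref{eq:global_balanced_condition} with $P((u,v))=\pi(u)P(u\to v)$ says that in-mass and out-mass of $P$ agree at every node. Take a simple path with $\Bar{u}\neq\Bar{v}$ and $l\ge 2$. Under your time-weighted redistribution, the first chain edge $(\Bar{u},w_1)$ receives only $\frac{T_{(\Bar{u},w_1)}}{T_{\Bar{u}\Bar{v}}}\Bar{P}(e^\star)$. In the enlarged problem, by contrast, $\Bar{u}$ emits the full $\Bar{P}(e^\star)$ along $e^\star$. The consequences are:
\begin{itemize}
\item if $\Bar{P}$ is balanced at $\Bar{u}$, its image has surplus inflow $\bigl(1-\frac{T_{(\Bar{u},w_1)}}{T_{\Bar{u}\Bar{v}}}\bigr)\Bar{P}(e^\star)$ there, with an analogous inflow deficit at $\Bar{v}$;
\item an interior node $w_k$ balances only if its two chain edges have equal traversal times;
\item conversely, lifting a balanced unit-time trajectory on $E_d$ breaks balance at $\Bar{u}$ in the enlarged graph.
\end{itemize}
Your sentence that the macro-edge ``carries flow exactly as the chain does'' is true for traversal rates $P(e)/T_e$, which is the very factor $T_{\Bar{u}\Bar{v}}$ you flag at the end. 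It is not true for the time-fractions in which the balance equation is written.

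The paper never invokes balance in this proof. It stays with trajectory-derived quantities, checks that $\Bar{P}$ sums to one, and compares $\mathcal{A}$. You should do the same, since your trajectory-rewriting argument already shows the image is realized by an actual trajectory on $E_d$. Alternatively, restate balance in rate-normalized form. As written, the claim that the images satisfy the balance equation is wrong.
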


    Observe that Theorem \ref{theorem:close_loop} can be applied repeatedly, and it increases only $1$ parameter per path. Note how the base formulation can be obtained by repeatedly applying Theorem \ref{theorem:close_loop} on the empty edge set with paths of unit length. Adding higher length paths puts a hard continuity constraint in addition to the soft probabilistic one thus giving a fine-grained control. Suppose we can handle twice the number of parameters and want to get more information about the movement of the agents in the time domain, then we can use Theorem \ref{theorem:split_flow}. Let $\mathcal{P}(p)$ represent the set of all countable partitions of the interval $[0, T]$ for $p \in (0,1)$ such that:

    \begin{equation}
        \begin{split}
            \int_0^T I(t)\, dt = pT, \,\, \forall \, (t_1, t_2, \dots) \in \mathcal{P}(p) \,\,\text{ where }\,
            I(t) =\begin{cases}
                1 & t \in t_i, \text{i is even}\\
                0 & \text{otherwise}
            \end{cases}            
        \end{split}
    \end{equation}
    
    For any partition $(t_1, t_2, \dots) \in \mathcal{P}(p)$, we can split the $h^i_{(u,v)}(t)$'s into two functions $\Hat{h}^i_{(u,v)}(t)$ and $\Bar{h}^i_{(u,v)}(t)$ such that:

    \begin{equation}
    \label{eq:split_functions}
        \begin{split}
            \hat{h}^i_{(u,v)}(t) = \begin{cases}
                h^i_{(u,v)}(t) & t \in t_i, \text{i is odd}\\
                0 & \text{otherwise}
            \end{cases}\,\text{ and }\,
            \bar{h}^i_{(u,v)}(t) = \begin{cases}
                h^i_{(u,v)}(t) & t \in t_i, \text{i is even}\\
                0 & \text{otherwise}
            \end{cases}            
        \end{split}
    \end{equation}

    Observe that $h^i_{(u,v)}(t) = \Hat{h}^i_{(u,v)}(t) + \Bar{h}^i_{(u,v)}(t)$.
    \begin{theorem}
    \label{theorem:split_flow}
    For any partition $(t_1, t_2, \dots) \in \mathcal{P}(p)$, let
        \begin{equation*}
            \begin{split}
                \hat{P}((u,v)) = \frac{1}{npT}\sum_{i=1}^n\int_0^T \hat{h}_{(u,v)}^i(t) dt  \,\text{ and }\,
                \bar{P}((u,v)) = \frac{1}{n(1-p)T}\sum_{i=1}^n\int_0^T \bar{h}_{(u,v)}^i(t) dt                
            \end{split}
        \end{equation*}
        
    Then, $\Hat{P}$ and $\Bar{P}$ form a probability distribution over the edges $E_d$. Furthermore, when decomposed in edge transition probabilities, represented as $\Hat{P}(u \rightarrow v)$ and $\Bar{P}(u \rightarrow v)$ for edge $(u,v) \in E_d$, and a distribution over the nodes, represented as $\Bar{\pi}(u)$ and $\Hat{\pi}(u)$ for vertex $u \in V_d$, respectively, then the following holds:
        \begin{equation*}
            \pi(u) = p\Hat{\pi}(u) + (1-p)\Bar{\pi}(u)\\
        \end{equation*}
    Furthermore, the following constraints are sufficient to ensure flow constraints:
    \begin{equation*}
        \begin{split}
            \sum_{\{v|(u,v) \in E_d\}} \Hat{\pi}(u)\Hat{P}(u \rightarrow v) = \Hat{\pi}(v) \,\text{ and }\,
            \sum_{\{v|(u,v) \in E_d\}} \Bar{\pi}(u)\Bar{P}(u \rightarrow v) = \Bar{\pi}(v)                        
        \end{split}
    \end{equation*}
    
    \end{theorem}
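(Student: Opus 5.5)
The plan is to replay, on each half of the time partition separately, the arguments behind Theorem~\ref{theorem:probability_distribution} and Theorem~\ref{theorem:node_probability}, and then recombine the two halves by linearity of the integral.

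\textbf{Step 1: $\hat P$ and $\bar P$ are distributions.} The key fact is that at every time $t$ each agent $i$ is on exactly one edge of $E_d$; staying at a node counts as traversing its self-loop. Hence $\sum_{(u,v)\in E_d} h^i_{(u,v)}(t)=1$ for all $t$. This is the same identity used to show $P$ is a distribution in Theorem~\ref{theorem:probability_distribution}.

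Multiplying by the indicator of the odd-indexed pieces shows $\sum_{(u,v)}\hat h^i_{(u,v)}(t)=1-I(t)$, and likewise $\sum_{(u,v)}\bar h^i_{(u,v)}(t)=I(t)$. By the definition of $\mathcal{P}(p)$, $\int_0^T I(t)\,dt=pT$.

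Here I expect a bookkeeping obstacle. The statement normalizes $\hat P$ by $npT$ and $\bar P$ by $n(1-p)T$. The definition of $\mathcal{P}(p)$, however, gives the even-indexed (that is, $\bar h$) pieces total length $pT$. So the labels must be matched consistently. I would read the definition as assigning measure $pT$ to the pieces defining $\hat h$, which amounts to swapping parity or replacing $p$ by $1-p$. With that reading, summing over edges and agents gives $\sum_{(u,v)}\hat P((u,v))=1$ and $\sum_{(u,v)}\bar P((u,v))=1$. Nonnegativity is immediate.

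\textbf{Step 2: the mixture identity.} From $h=\hat h+\bar h$ and linearity of the integral,
\[
P((u,v))=p\hat P((u,v))+(1-p)\bar P((u,v)).
\]
Decompose each distribution as in Equation~\ref{eq:probability_decomposition} and Theorem~\ref{theorem:node_probability}. For $\hat P$ set $\hat\pi(u)=\sum_{v:(u,v)\in E_d}\hat P((u,v))$, which is the time fraction agents spend leaving or staying at $u$ within the odd pieces, and $\hat P(u\to v)=\hat P((u,v))/\hat\pi(u)$ whenever $\hat\pi(u)>0$. Define $\bar\pi$ and $\bar P(u\to v)$ the same way.

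Summing the edge identity over the out-edges of $u$ yields
\[
\pi(u)=p\hat\pi(u)+(1-p)\bar\pi(u).
\]
Each of $\hat\pi$ and $\bar\pi$ sums to one by Step 1, so each is a probability distribution on $V_d$, mirroring Theorem~\ref{theorem:node_probability}.

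\textbf{Step 3: sufficiency of the two balance equations.} Assume $\hat\pi,\hat P(\cdot\to\cdot)$ and $\bar\pi,\bar P(\cdot\to\cdot)$ each satisfy global balance (Equation~\ref{eq:global_balanced_condition}). The sum in the statement should be read as running over in-neighbours $u$ of $v$.

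Multiply the first equation by $p$, the second by $1-p$, and add. Using Step 2 edge by edge gives
\[
\sum_u P((u,v))=p\hat\pi(v)+(1-p)\bar\pi(v)=\pi(v),
\]
that is, $\sum_u \pi(u)P(u\to v)=\pi(v)$. This is exactly the flow constraint for the original chain, so imposing balance on each half suffices. It is sufficient but not necessary, since the combined chain may balance even when the halves do not.

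For the converse direction, namely that such balance is physically meaningful for each half, I would note only that each half is a time-restricted occupancy measure. Its inflow and outflow at any node differ by at most the boundary effects at partition endpoints. Those effects vanish in the $T\to\infty$ regime already implicitly used for Equation~\ref{eq:global_balanced_condition}. I would state this as a remark rather than prove it in full.
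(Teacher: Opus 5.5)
Your proof is correct and is essentially the paper's: the paper also uses $h=\hat h+\bar h$ and linearity to get the edge-wise mixture, sums over out-edges to get the mixture for $\pi$, and proves sufficiency via $(I-M)x = p(I-\bar M)\bar x + (1-p)(I-\hat M)\hat x$, which is your weighted sum of the two balance equations in matrix form; the paper handles the $p$ versus $1-p$ mismatch you flagged by normalizing $\bar P$ with $npT$, so its mixture identity comes out with $\hat\pi$ and $\bar\pi$ interchanged relative to the statement. Your closing remark that each half is balanced up to vanishing boundary terms is not needed and is false in general, because a partition in $\mathcal{P}(p)$ can have a number of pieces growing with $T$: for a single agent alternating $a\to b\to a$ in unit steps, with the even pieces covering exactly the $b\to a$ traversals, the half is never balanced.
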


    To ensure that the found distributions are as different as possible, we can optimize for the following loss instead of the original loss $\mathcal{L}$:
        $$\mathcal{L}_{\Lambda} = \mathcal{L} - \frac{1}{2}\sum_{(u,v) \in E_d} \Lambda_{(u,v)}\left(\Hat{P}((u,v)) - \Bar{P}((u,v))\right)^2 \,\,\text{where $\Lambda: E_d \rightarrow \mathbb{R}$ is some fixed function}$$

    As the theorem holds for any partition, optimizing the loss will give us the distributions that minimize the loss over all partitions in $\mathcal{P}$. 

%% file: sections/results.tex
\section{Experiments \& Results}
\label{sec:results}

We evaluate BOIL on two distinct topographic environments: a ``Small" ($36 \times 36$) and a ``Large" environment (deferred to Appendix~\ref{app:large_environment}), both featuring complex occlusions. All computation was performed using 19 CPU cores without GPU acceleration. We explicitly exclude Deep Multi-Agent Reinforcement Learning (MARL) baselines (e.g., MAPPO~\cite{yu2022surprising}, MASAC~\cite{lowe2017multi}) from this study; while powerful, these methods typically require days of GPU-based training to converge on such agent history dependent tasks. Another reason why we exclude MARL methods is because the solution depends highly on the exact path taken by the agents. It is well known that MARL methods are more suitable for dynamic or adversarial scenarios rather than path finding problems over a long horizon. Furthermore, there is an issue of reward shaping for the task. Given the large variety of approaches and modern advances in deep MARL methods, we want to avoid comparisons that are misrepresenting the pros or cons of our method in contrast to MARL methods in general. Moreover, as our method is more about the process of finding the distribution and not the exact path, the comparison is not fair for either. MARL is trying to solve the harder problem of finding exact paths while ours is primarily about finding some information about the paths. Given the computational cost for MARL methods, it renders them impractical for our target application: rapid, ad-hoc deployment in novel facilities where policies must be generated on-site using standard onboard hardware. In contrast, BOIL derives effective strategies within a few hours on CPU. Consequently, we compare against heuristic and graph-theoretic baselines that operate within a similarly accessible computational regime.

\subsection{Agent Strategies}
\label{subsec:agent_strategies}

Table~\ref{tab:agent_strategies} summarizes the agent strategies evaluated in our experiments. 
The Frontier and Sample agents use a count vector $C$, where $C(w)$ is the number of times 
node $w$ was visible, to define transition probabilities:
\begin{equation}
    P_{\text{frontier}}(u \to v) \propto \frac{1}{|V((u,v))|}
    \sum_{w \in V((u,v))} \frac{1}{\max(C(w), 10^{-6})}
\end{equation}
The Sample agent augments this with a Q-function using BOIL's learned distribution via 
MH sampling (Section~\ref{subsec:nonreversible_markov_chain}):
\begin{equation}
    Q(u,v) = 1 + \frac{\lambda}{|V((u,v))|}
    \sum_{w \in V((u,v))} \frac{1}{\max(C(w), 10^{-6})}
\end{equation}
with $\lambda = 10$ for all experiments controlling the preference for frontier exploration.

\begin{table}[h]
\centering
\resizebox{\columnwidth}{!}{%
\begin{tabular}{ll}
\toprule
\textbf{Agent} & \textbf{Description} \\
\midrule
Random         & Uniform random walk over adjacent edges \\
OptRandom      & Samples any edge in $E_d$ uniformly; may teleport. Strong unconstrained baseline \\
Frontier       & Prioritizes low-visit nodes via count vector $C$; randomized to avoid local optima \\
Comm Frontier  & Frontier with shared $C$ across all agents \\
Sample         & MH sampling from BOIL distribution with frontier Q-function bias \\
Comm Sample    & Sample agent with shared $C$ across all agents \\
Optimal        & Direct MH sampling from BOIL dist.; no continuity constraint. Upper bound reference \\
\bottomrule
\end{tabular}%
}
\caption{Summary of agent strategies}
\label{tab:agent_strategies}
\end{table}

\subsection{Experiments}

The small environment comprises a $36\times 36$ field, featuring tall walls and uneven topography illustrated in Figure \ref{fig:small_environment}. In the depiction, red blocks denote tall walls, impassable and invisible to the agent. Elevation levels are represented by varying shades of blue, with lighter shades indicating lower elevation and darker shades signifying higher elevation. We run simulation with homogeneous teams with $8$ agents with unidirectional visibility. Each agent can only see a point $3.5$ units away at maximum, implying each agent can only observe $\approx 3\%$ of the area at maximum at any given instant.

\begin{figure}[h]
    \centering
    \begin{subfigure}[t]{0.25\linewidth}
        {\includegraphics[scale=0.2]{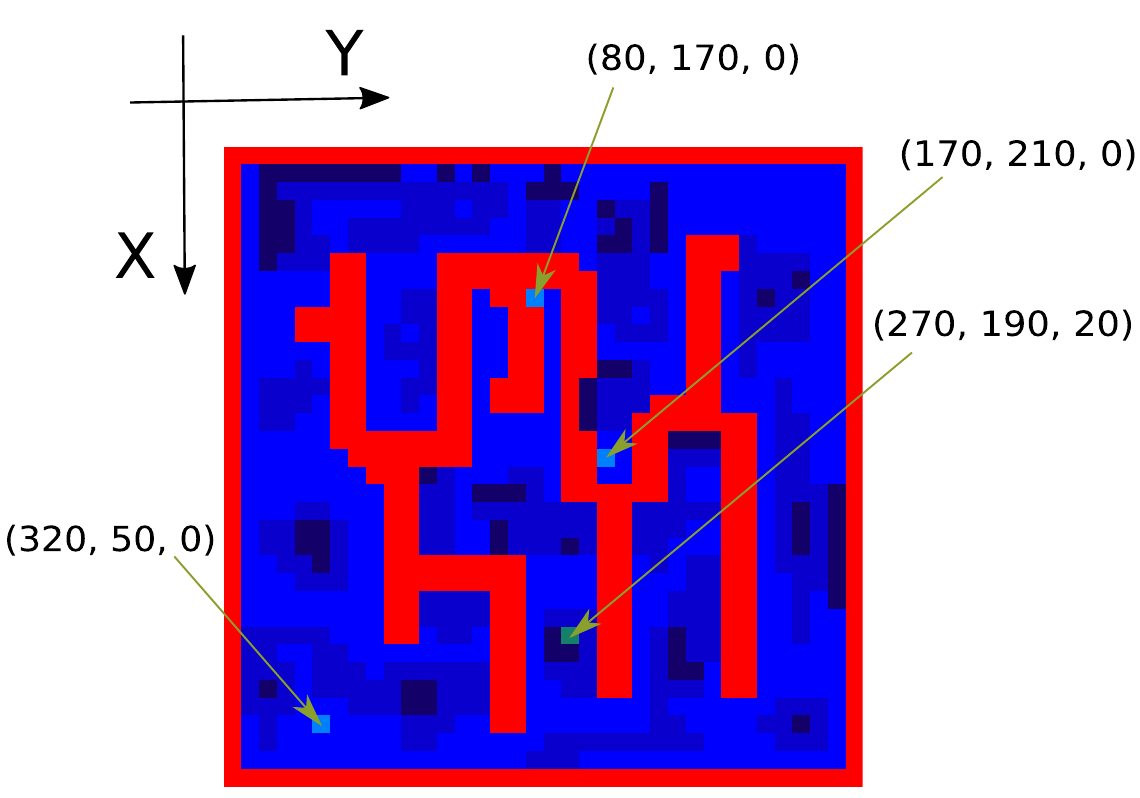}}
        {\caption{\small Small Environment}\label{fig:small_environment}}        
    \end{subfigure}
    \hspace{0.03\linewidth}
    \begin{subfigure}[t]{0.3\linewidth}
        {\includegraphics[scale=0.13]{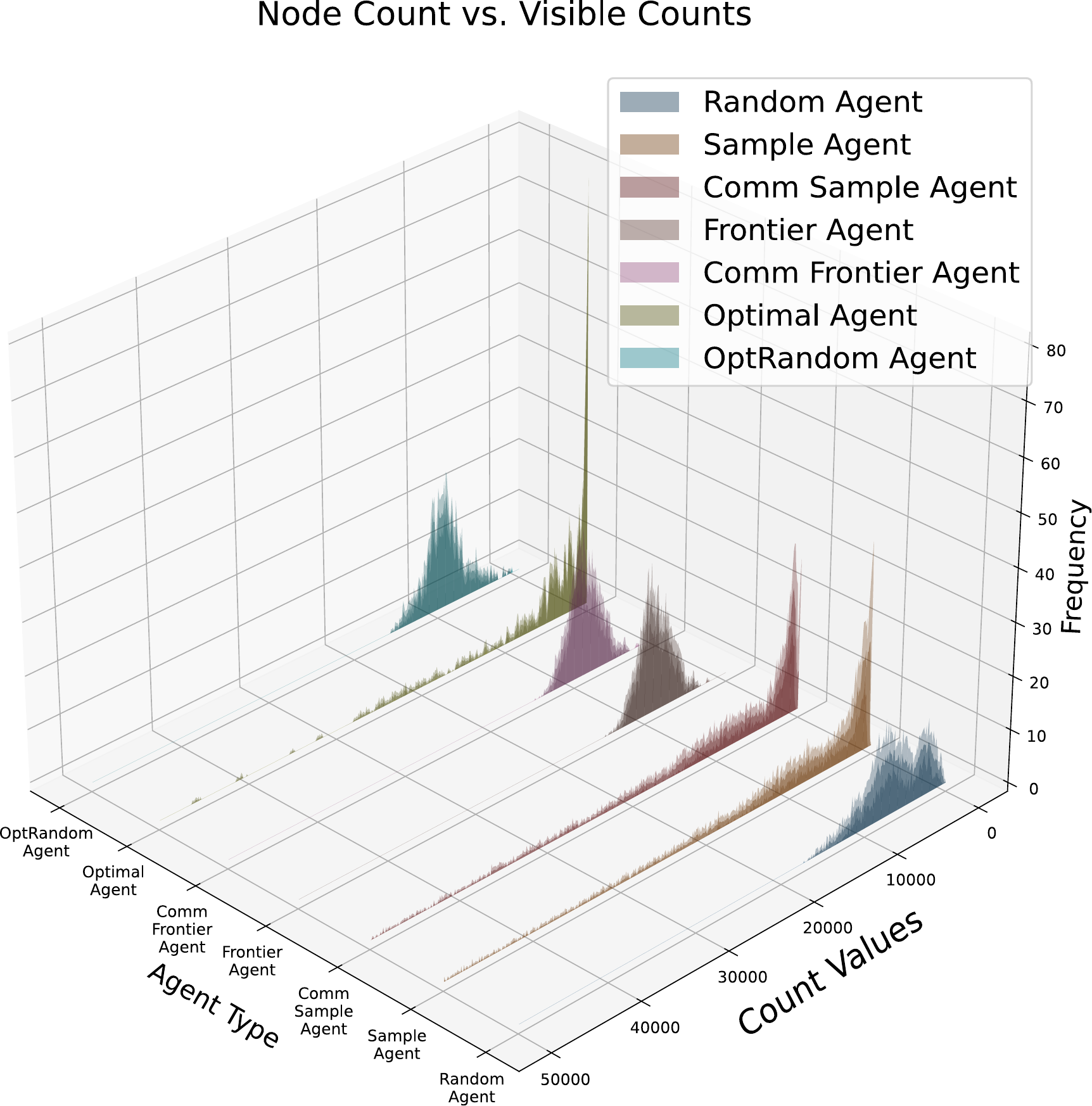}}
        \caption{\small Frequency histograms}
        \label{fig:small_environment_vis_counts}
    \end{subfigure}
    \hspace{0.03\linewidth}
    \begin{subfigure}[t]{0.35\linewidth}
        {\includegraphics[scale=0.15]{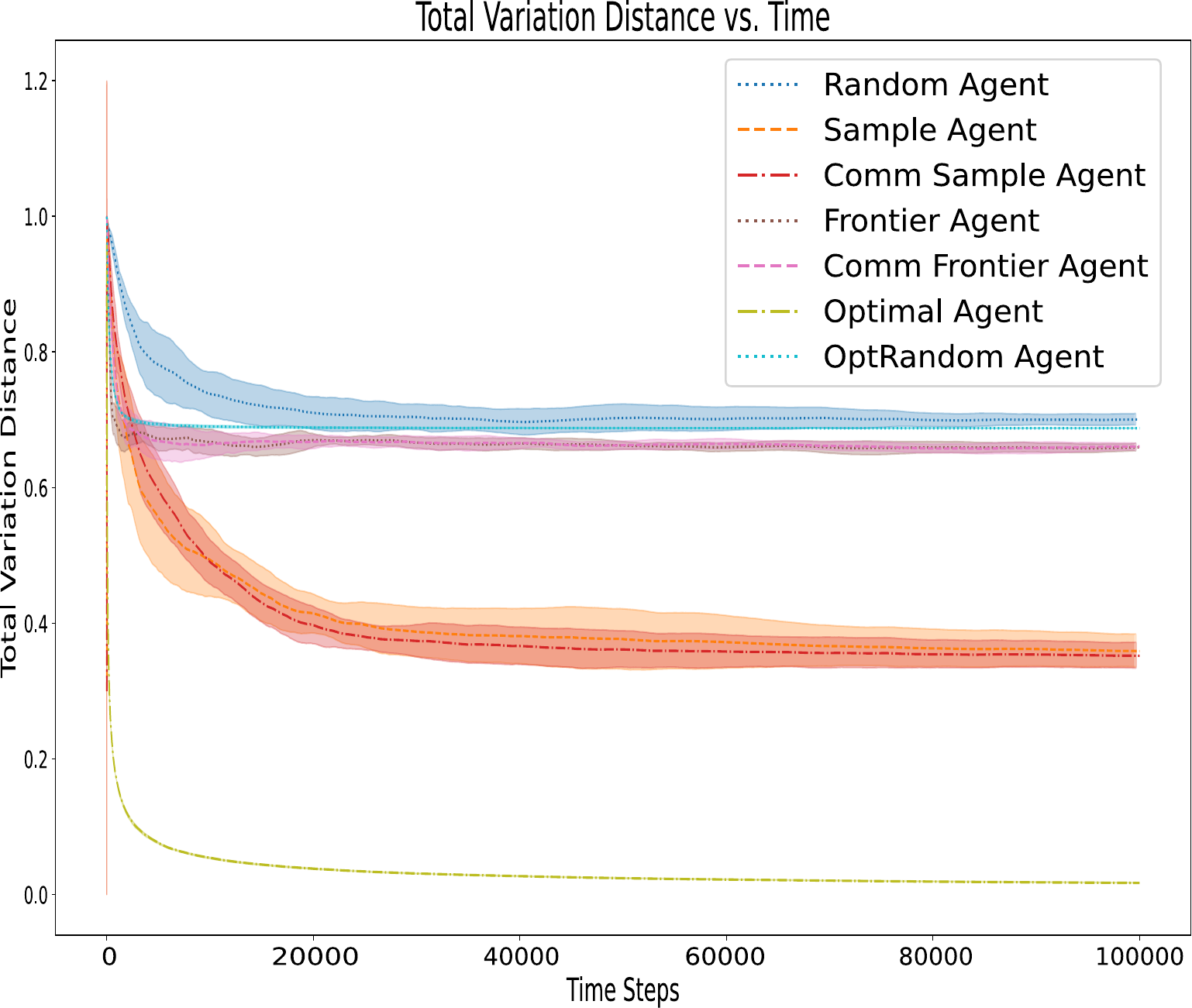}}
        {\caption{\small Variational distance}\label{fig:small_environment_convergence}}
    \end{subfigure}
    \caption{\small Figure (a) depicts the small environment with a color encoding easier for understanding the topology. The \textit{cyan-like} colors are pixels that encode the elevation, and that extra data is supposed to be stored for nodes representing those pixels. Figure (b): The Z-axis illustrates the counts of visible nodes corresponding to values on the Y-axis, while the X-axis denotes the types of homogeneous agents considered in the experiment. Curves' opacity signifies variations in values, with lower opacity indicating the upper end of the variance and higher opacity representing the lower end for frequencies. The central divider indicates the mean frequencies. Figure (c) depicts Total variation distance between the approximate distribution of the agent trajectories and the distribution found using the BOIL process for the small environment for all the time-steps. \textbf{BOIL Distribution Strategies:} \textit{Sample Agent, Comm Sample Agent, Optimal Agent}}
\end{figure}

The agent's movement is constrained, permitted only from lighter to darker shades in sequential order. However, it can descend directly from a darker shade to any blue-colored pixel. Elevational differences significantly impact visibility, with higher elevations offering greater visibility and vice versa. The \textit{cyan-like} colored pixels convey an elevation in the same three levels just that we use these points specifically to understand the behaviors of the different agents. We conducted simulations ten times, each spanning $10^5$ steps, to explore the effect of randomness. Figure \ref{fig:small_environment_vis_counts} presents quantitative visibility counts, while Figure \ref{fig:small_environment_convergence} displays the total variation distance (Definition~\ref{def:total_variation_distance}) of empirical distributions to those obtained through the BOIL process.

\begin{definition}
    \label{def:total_variation_distance}
    Given two probability $\mu$ and $\nu$ on the discrete state space $S$,
        $$d(\mu, \nu) = \frac{1}{2}\sum_{x \in S} |\mu(x) - \nu(x)|$$
    where $d$ is the total variation distance~\cite{gibbs2002choosing}.
\end{definition}

\begin{figure*}[b]
    \centering
    \includegraphics[scale=0.15]{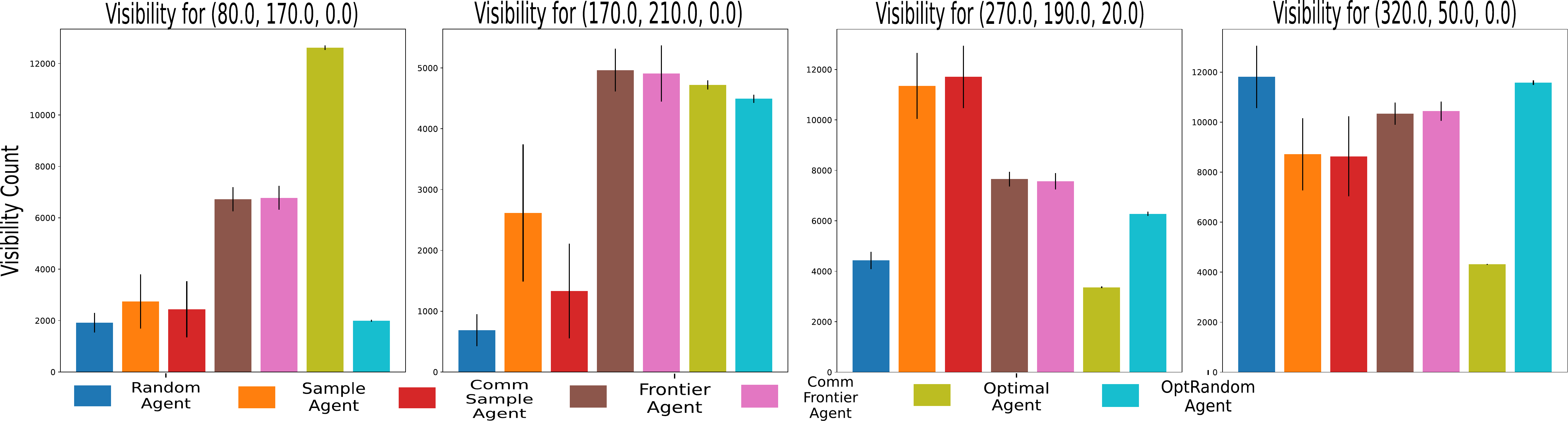}
    \caption{\small The figure shows the visibility counts for the \textit{cyan-like} colored pixels in figure \ref{fig:small_environment} for the different agent strategies. The bars shows the mean and the errorbars show the variance.}
    \label{fig:small_environment_tracking}
\end{figure*}

In the context of the uniform coverage problem, the objective is to ensure that agents explore every area equally. Figure \ref{fig:small_environment_vis_counts} reveals that Optimal Agents exhibit a prominent peak, indicating effective utilization of visibility across different locations. Interestingly, all strategies except for Sample and Comm Sample demonstrate distributions similar to OptRandom , suggesting suboptimal coverage patterns.

Despite their similarities to Optimal agents, Sample and Comm Sample strategies fail to achieve the optimal distribution in $10^5$ steps, as evident from Figure \ref{fig:small_environment_convergence}. While the distance of the Optimal agent converges to zero, it plateaus for the Sample and Comm Sample agents, though it continues to decrease slowly. Furthermore, the figure illustrates significantly higher and similar distances for all other strategies. These observations highlight the limitations of frontier exploration techniques, particularly in complex environments and over extended time horizons. Although Sample-based strategies exhibit similar distributions to Optimal agents, they fail to achieve the desired optimal distribution.

Figure \ref{fig:small_environment_tracking} provides a cumulative count of how many times a particular point was visible. It is intriguing to observe that the Optimal agents exhibit a pronounced preference for observing the corner point significantly more times than others. Conversely, both Frontier and Comm Frontier agents exhibit higher counts than Sample and Comm Sample agents in $3$ out of $4$ points. Notably, the point with a higher elevation, visible more frequently from Sample and Comm Sample agents, is surrounded by areas at higher elevations. This observation suggests that the simple sampling-based strategy effectively leverages the distribution obtained from BOIL to prioritize high-visibility points. This finding underscores the adaptability and efficacy of sampling-based strategies in leveraging information extracted through the BOIL process to enhance visibility and coverage in complex environments. Further analysis can be found in Appendix \ref{app:further_analysis}.

%% file: sections/discussion.tex
\section{Discussion}
\label{sec:discussion}

    \subsection{Extension to Patrolling and Reachability}
    \label{subsec:extension_patrolling_reachability}
    The technique can be extended to other problems like patrolling and reachability problems. Patrolling can be defined as the problem where the agents want to patrol certain points on the graph at all times. Reachability can be defined as the problem where the agent wants to ensure that it can reach any particular place within a certain amount of time.

    Consider a set of vertices $V_p \subseteq V$ that needs to be patrolled by the agents for the patrolling problem. Keeping the same setting as in the case of coverage problem discussed before, we only need to modify the loss function to the following:
    \begin{equation}
        \label{eq:loss_function_patrolling}
        \begin{split}
            \mathcal{L}_p = \sum_{w \in V_p} -\mathcal{A}(w)\log\mathcal{A}(w) \,\,\text{ where }\,
            \mathcal{A}(w) = \sum_{(u,v)\in E_d} P((u,v))V_s((u,v))(w)            
        \end{split}
    \end{equation}
    The modified loss only minimizes the expected information from points of interest and discards the information from other nodes. 
    For the reachability problem, in place of a visibility function, we define a reachability function $R: E_d \rightarrow (0,1)^{|V|}$ where $R((u,v))(w)$ represents the probability that an agent can reach the node $w \in V$ if the agent is traversing the edge $(u,v)$ within a predefined time $T_R(w)$. We can now replace the visibility function in the formulation for the coverage problem.

    \subsection{Conclusion, Limitations \& Future Work}
    \label{subsec:conclusions}
    
    In this paper, we introduced the Blackbox Oracle Information Learning (BOIL) process as a scalable solution for extracting valuable insights from the environment structure in multi-agent systems. Leveraging the Pagerank algorithm and information theory, BOIL enables the extraction of information about long-term agent behavior. We demonstrated the flexibility of the formulation by applying it to various problems such as coverage, patrolling, and stochastic reachability, all converted into common information maximization problems solvable using the same technique.
    
    
    An important assumption of our work is the availability of reliable information about the environment. With the entire process being offline, it is not possible to deal with dynamic changes in the environment. To address this limitation, future work aims to extend the framework to allow for online updates in the BOIL process, coupled with a controller capable of independently utilizing extracted information. This direction holds promise, as BOIL is an iterative process seemingly independent of noise in control policies.
    

%% file: appendix/probability_sandwich.tex
\section{BOIL}
\begin{lemma}
\label{lemma:visibility_linearity}
Given two continuous paths $(\Bar{u}, x_1, \dots, x_{l-1}, \Bar{v})$ and $(\Bar{v}, y_1, \dots, y_{m-1}, \Bar{w})$ with traversal times represented as $T_{\Bar{u}\Bar{v}}$ and $T_{\Bar{v}\Bar{w}}$ respectively, for all $z \in V$:

    \begin{equation}
    \begin{aligned}
        V_s((\Bar{u},\dots, \Bar{v}, \dots, \Bar{w}))(z) &= \frac{T_{\Bar{u}\Bar{v}}V_s((\Bar{u}, \dots, \Bar{v}))(z) + T_{\Bar{v}\Bar{w}}V_s((\Bar{v}, \dots, \Bar{w}))(z)}{T_{\Bar{u}\Bar{v}} + T_{\Bar{v}\Bar{w}}}            
    \end{aligned}
    \end{equation}        
\end{lemma}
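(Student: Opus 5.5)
The plan is to unfold the definition of path visibility for the concatenated path and split the time integral at the junction node $\Bar{v}$. By definition, the concatenated path $(\Bar{u},\dots,\Bar{v},\dots,\Bar{w})$ has traversal time $T_{\Bar{u}\Bar{w}}$, and
$$V_s((\Bar{u},\dots,\Bar{v},\dots,\Bar{w}))(z) = \frac{1}{T_{\Bar{u}\Bar{w}}}\int_0^{T_{\Bar{u}\Bar{w}}} \mathcal{V}_z(t)\,dt .$$
The first step is to justify that $T_{\Bar{u}\Bar{w}} = T_{\Bar{u}\Bar{v}} + T_{\Bar{v}\Bar{w}}$. This uses the standing assumption of \cref{subsec:coverage_formulation} that the traversal time of any edge in $E_d$ is independent of the agent's previous actions. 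The time to traverse the second path after arriving at $\Bar{v}$ is therefore the same as traversing it from scratch, and traversal times add along concatenation.

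The second step is to split the integral over $[0, T_{\Bar{u}\Bar{v}}]$ and $[T_{\Bar{u}\Bar{v}}, T_{\Bar{u}\Bar{v}}+T_{\Bar{v}\Bar{w}}]$. On the first interval, the visibility indicator of the concatenated traversal coincides with the indicator $\mathcal{V}_z$ for the first path alone. Hence the first piece equals $T_{\Bar{u}\Bar{v}}V_s((\Bar{u},\dots,\Bar{v}))(z)$ by the definition of path visibility.

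On the second interval, I substitute $s = t - T_{\Bar{u}\Bar{v}}$. I then argue that the shifted indicator equals the indicator for the second path traversed on its own. This holds because visibility depends only on the agent's current position along the path, not on the absolute time or history; the paper makes this explicit when it notes that the visibility function depends only on the state and not on the time it is reached. The second piece is then $T_{\Bar{v}\Bar{w}}V_s((\Bar{v},\dots,\Bar{w}))(z)$. Dividing the sum of the two pieces by $T_{\Bar{u}\Bar{v}}+T_{\Bar{v}\Bar{w}}$ gives the claimed identity for every $z \in V$.

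The main obstacle is this time-shift and history-independence step. The definition only specifies $\mathcal{V}_w$ abstractly as an indicator on $[0,T]$ for a given path traversal. I would make explicit the modelling assumption that the indicator for the concatenated traversal restricted to each subinterval is the translate of the indicator of the corresponding subpath. This follows from the memoryless-traversal and state-dependent-visibility assumptions above. The boundary instant $t = T_{\Bar{u}\Bar{v}}$ has measure zero, so it does not affect the integrals.
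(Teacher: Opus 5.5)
Your proposal is correct and takes essentially the same route as the paper: additivity $T_{\Bar{u}\Bar{w}} = T_{\Bar{u}\Bar{v}} + T_{\Bar{v}\Bar{w}}$ from the history-independent traversal-time assumption, splitting the integral at $T_{\Bar{u}\Bar{v}}$, and the shift $t \mapsto t - T_{\Bar{u}\Bar{v}}$ on the second piece. You are in fact more careful than the paper, which uses the same symbol $\mathcal{V}_z$ for all three traversals without comment, whereas you state explicitly the modelling assumption that the concatenated indicator on each subinterval is the translate of the corresponding subpath's indicator.
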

\begin{proof}
    \label{proof:lemma_visibility_linearity}
    Let the travel time for the path $(\Bar{u}, \dots, \Bar{v}, \dots, \Bar{w})$ be $T_{\Bar{u}\Bar{w}}$. It is easy to observe that $T_{\Bar{u}\Bar{w}} = T_{\Bar{u}\Bar{v}} + T_{\Bar{v}\Bar{w}}$ as we assume that the traversal time is independent of previous actions of the agent.

    \begin{equation}
            T_{\Bar{v}\Bar{w}}V_s((\Bar{v},\dots, \Bar{w}))(z) = \int_0^{T_{\Bar{v}\Bar{w}}} \mathcal{V}_z(t) dt
            = \int_{T_{\Bar{u}\Bar{v}}}^{T_{\Bar{u}\Bar{w}}} \mathcal{V}_z(t - T_{\Bar{u}\Bar{v}}) d(t - T_{\Bar{u}\Bar{v}} = \int_{T_{\Bar{u}\Bar{v}}}^{T_{\Bar{u}\Bar{w}}} \mathcal{V}_z(t) d(t)
    \end{equation}

    Hence, we have that,
    \begin{equation}
    \begin{split}
        T_{\Bar{u}\Bar{v}}V_s((\Bar{u},\dots, \Bar{v}))(z) + T_{\Bar{v}\Bar{w}}V_s((\Bar{v},\dots, \Bar{w}))(z)
        &= \int_0^{T_{\Bar{u}\Bar{v}}} \mathcal{V}_z(t) d(t) + \int_{T_{\Bar{u}\Bar{v}}}^{T_{\Bar{u}\Bar{w}}} \mathcal{V}_z(t) d(t)\\
        &= \int_0^{T_{\Bar{u}\Bar{w}}} \mathcal{V}_z(t) d(t)
    \end{split}
    \end{equation}
\end{proof}

The following is proof for Theorem \ref{theorem:node_probability}
\begin{proof}
    \label{proof:node_probability}
    Using the fact that we have probability distribution on the edges $E_d$, we have,
    \begin{equation*}
    \begin{split}
        \sum_{(u,v) \in E_d} P((u,v)) &= \sum_{(u,v) \in E_d} \pi(u)P(u \rightarrow v)\\
        &= \sum_{u \in V_d} \sum_{\{v| (u,v) \in E_d\}} \pi(u)P(u \rightarrow v)\\
        &= \sum_{u \in V_d} \pi(u) \sum_{\{v| (u,v) \in E_d\}} \pi(u)P(u \rightarrow v)\\
        &= \sum_{u \in V_d} \pi(u)
    \end{split}
    \end{equation*}
\end{proof}

\label{app:probability_sandwich}
\begin{theorem}
\label{theorem:sandwich}
    Let us define the following for any given natural $n$, time $0 \le t \le T$, and $w \in V$:
    \begin{equation*}
        \mathcal{X}^i = \sum_{(u,v) \in E_d}h^i_{(u,v)}(t)V_s((u,v))(w)
    \end{equation*}
    Then the following holds:
    \begin{equation*}
        \sum_{i=1}^n \mathcal{X}^i - \sum_{1\le i<j \le n} \mathcal{X}^i\mathcal{X}^j \le Y_t(w) \le \sum_{i=1}^n \mathcal{X}^i
    \end{equation*}
\end{theorem}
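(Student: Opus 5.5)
Fix $t \in [0,T]$ and $w \in V$, and abbreviate $x_i := \mathcal{X}^i$. By Equation~\ref{eq:visibility_time}, the $w$-th coordinate of $Y_t$ is
\[
Y_t(w) = 1 - \prod_{i=1}^n (1 - x_i).
\]
The plan is to reduce the statement to the classical Bonferroni inequalities for $1-\prod_i(1-x_i)$ with $x_i \in [0,1]$. The first step is to check that $x_i \in [0,1]$. At any instant agent $i$ is crossing at most one edge, so at most one of the $h^i_{(u,v)}(t)$ equals $1$ and the rest vanish. The second step is to use that $V_s((u,v))(w) \in [0,1]$ by definition of the visibility function. Together these give $x_i \in [0,1]$. (If the oracle trajectories allowed simultaneous crossings this would fail, so I would state the single-edge-at-a-time property explicitly as part of the interpretation of $h^i$.)

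For the upper bound, I would show by induction on $n$ that $\prod_{i=1}^n(1-x_i) \ge 1 - \sum_{i=1}^n x_i$ (Weierstrass product inequality). The base case is trivial. For the inductive step, multiply by $1 - x_{n+1} \ge 0$:
\[
\prod_{i=1}^{n+1}(1-x_i) \ge \Bigl(1-\sum_{i\le n} x_i\Bigr)(1-x_{n+1}) = 1 - \sum_{i\le n+1} x_i + x_{n+1}\sum_{i\le n} x_i .
\]
The last term is nonnegative, so the claim follows. This multiplication step is valid whether or not $1 - \sum_{i \le n} x_i$ is negative, because $1 - x_{n+1} \ge 0$ multiplies both sides of the inductive inequality. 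Rearranging gives $Y_t(w) \le \sum_i x_i$.

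For the lower bound, I need $\prod_{i=1}^n(1-x_i) \le 1 - \sum_i x_i + \sum_{i<j} x_i x_j$, again by induction on $n$. Writing $e_1 := \sum_{i\le n} x_i$ and $e_2 := \sum_{i<j\le n} x_i x_j$, multiplying the inductive hypothesis by $1 - x_{n+1} \ge 0$ gives
\[
\prod_{i=1}^{n+1}(1-x_i) \le (1 - e_1 + e_2)(1 - x_{n+1}) = 1 - e_1 - x_{n+1} + e_2 + x_{n+1} e_1 - x_{n+1} e_2 .
\]
Since $e_2 + x_{n+1} e_1$ is exactly the pairwise sum over $n+1$ indices, it only remains to drop $-x_{n+1} e_2 \le 0$, which holds because all $x_i \ge 0$. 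Rearranging yields $\sum_i x_i - \sum_{i<j} x_i x_j \le Y_t(w)$.

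The computation itself is routine. The main obstacle is the first step, namely justifying that the bracketed factor in Equation~\ref{eq:visibility_time} lies in $[0,1]$, since both inductions rely on the factors $1 - x_i$ being nonnegative.
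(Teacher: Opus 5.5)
Your proposal is correct and follows the paper's proof essentially step for step. The paper likewise first obtains $0\le\mathcal{X}^i\le 1$ from the fact that agent $i$ crosses only one edge at a time and $0\le V_s\le 1$, then proves both bounds by induction on $n$ (with equality at $n=1$), multiplying the inductive hypothesis by $1-\mathcal{X}^{n+1}\ge 0$ and discarding the nonnegative terms $x_{n+1}e_1$ for the upper bound and $x_{n+1}e_2$ for the lower bound.
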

\begin{proof}
    By definition of $h^i_{(u,v)}(t)$, only one of them can be $1$ at a given time $t$ for any $1 \le i \le n$. That is, $\sum_{(u,v) \in E_d} h^i_{(u,v)}(t) = 1$

    As $0 \le V_s((u,v)) \le 1$ for any $(u,v)\in E_d$, we have $0 \le \mathcal{X}^i \le 1$
    
    For $n=1$, it is easy to observe that the equality holds. We now prove the result using induction.
    Assume that the statement is true for $n$.
    \begin{equation}
        \implies \sum_{i=1}^n \mathcal{X}^i - \sum_{1\le i<j \le n} \mathcal{X}^i\mathcal{X}^j\le 1-\prod_{i=1}^n \left(1-\mathcal{X}^i\right) \le \sum_{i=1}^n \mathcal{X}^i
    \end{equation}

    When there are $n+1$ agents, we have the following:

    \begin{equation}
        \begin{split}
            &1-\prod_{i=1}^{n+1} \left(1-\mathcal{X}^i\right)
            =1 - \left(1-\mathcal{X}^{n+1}\right)
            \prod_{i=1}^n \left(1-\mathcal{X}^i\right)
            \le 1 - \left(1-\mathcal{X}^{n+1}\right)
            \left(1-\sum_{i=1}^n \mathcal{X}^i\right)\\
            \implies &1-\prod_{i=1}^{n+1} \left(1-\mathcal{X}^i\right)
            \le \sum_{i=1}^{n+1} \mathcal{X}^i
            - \sum_{i=1}^n \mathcal{X}^i\mathcal{X}^{n+1}
            \le \sum_{i=1}^{n+1} \mathcal{X}^i
        \end{split}
    \end{equation}

    Let $\mathcal{M}_1 = \sum_{i=1}^n \mathcal{X}^i$ and $\mathcal{M}_2 = \sum_{1\le i < j \le n}\mathcal{X}^i\mathcal{X}^j$. Then, we have
    \begin{equation}
        \begin{split}
            1-\prod_{i=1}^{n+1} \left(1-\mathcal{X}^i\right)
            &=1 - \left(1-\mathcal{X}^{n+1}\right)
            \prod_{i=1}^n \left(1-\mathcal{X}^i\right)
            \ge 1 - \left(1-\mathcal{X}^{n+1}\right)\left(1-\mathcal{M}_1 + \mathcal{M}_2\right)\\
            \implies 1-\prod_{i=1}^{n+1} \left(1-\mathcal{X}^i\right)
            &\ge \mathcal{X}^{n+1} + \mathcal{M}_1 - \mathcal{M}_2 + \left(\mathcal{X}^{n+1}\right)(\mathcal{M}_2 - \mathcal{M}_1)\\
            &\ge \left(\mathcal{X}^{n+1} + \mathcal{M}_1\right) -\left(\mathcal{M}_2 + \mathcal{M}_1\mathcal{X}^{n+1}\right)
            \ge \sum_{i=1}^{n+1} \mathcal{X}^i - \sum_{1\le i<j \le n+1} \mathcal{X}^i\mathcal{X}^j
        \end{split}
    \end{equation}
    
\end{proof}

Now we prove Theorem \ref{theorem:probability_distribution} using the above result.
\begin{proof}
    We first show the relation between $\mathcal{X}^i(t)$ and the probability values.
    \begin{equation}
        \begin{split}
            \mathcal{X}^i(t) &= \sum_{(u,v) \in E_d} h^i_{(u,v)}(t)V_s((u,v))(w)\\
            \implies \int_0^T \mathcal{X}^i(t) dt &= \int_0^T \sum_{(u,v) \in E_d}V_s((u,v))(w) h^i_{(u,v)}(t) dt\\
            &= \sum_{(u,v) \in E_d}V_s((u,v))(w) \int_0^T h^i_{(u,v)}(t) dt\\
            \implies \sum_{i=1}^n \int_0^T \mathcal{X}^i(t)dt &= nT\sum_{(u,v) \in E_d} P((u,v))V_s((u,v))(w)
        \end{split}
    \end{equation}
    Using Theorem \ref{theorem:sandwich}, we have the following:
    \begin{equation*}
        \sum_{i=1}^n \mathcal{X}^i(t) - \sum_{1\le i<j \le n} \mathcal{X}^i(t)\mathcal{X}^j(t) \le Y_t(w) \le \sum_{i=1}^n \mathcal{X}^i(t)
    \end{equation*}
    
    Hence, we have the following:
    \begin{equation*}
        \int_0^T Y_t(w)\, dt \le nT\sum_{(u,v) \in E_d} P((u,v))V_s((u,v))(w)
    \end{equation*}
    We also have the following:
    \begin{equation*}
        nT\sum_{(u,v) \in E_d} P((u,v))V_s((u,v))(w) 
        \le \sum_{1\le i<j \le n} \int_0^T \mathcal{X}^i(t)\mathcal{X}^j(t)\,dt + \int_0^T Y_t(w)\, dt 
    \end{equation*}

    We can write $Y_t(w)$ in terms of $\mathcal{X}^i(t)$ as follows:
    \begin{equation}
        Y_t(w) = 1- \prod_{i=1}^n\left(1 - \mathcal{X}^i(t)\right)
    \end{equation}
    Using the inequality of geometric mean and arithmetic mean, we can write:
    \begin{equation}
        \left(1 - \frac{\sum_{i=1}^n \mathcal{X}^i(t)}{n}\right)^n \ge \prod_{i=1}^n\left(1 - \mathcal{X}^i(t)\right)
    \end{equation}
    As $0 \le \mathcal{X}^i(t) \le 1$, we have that $0 \le 1 - \left(\sum_{i=1}^n \mathcal{X}^i(t)\right)/n \le 1$.
    \begin{equation}
    \begin{split}
        &\implies 1 - \frac{\sum_{i=1}^n \mathcal{X}^i(t)}{n} \ge \prod_{i=1}^n\left(1 - \mathcal{X}^i(t)\right)\\
        &\implies \frac{\sum_{i=1}^n \mathcal{X}^i(t)}{n} \le Y_t(w)\\
        &\implies T \sum_{(u,v) \in E_d} P((u,v))V_s((u,v))(w) \le \int_0^T Y_t(w) dt
    \end{split}
    \end{equation}
\end{proof}

We can define the probability that a particular node $w$ is visible as any given time $t$ chosen uniformly at random in the interval $[0, T]$ in terms of $Y_t(w)$ as follows:
\begin{equation}
    P_V(w) = \frac{1}{T}\int_0^T Y_t(w) dt
\end{equation}

Let us make binary indicator variables for each node represented as $V_w$. Then the probability that $V_w = 1$ for any time time is $P_V(w)$. \citet{liu2010common} defines the common information using an auxiliary random variable $W$ which makes individual variables independent. We can now write the common information as:
\begin{equation}
    C(\{V_w| w \in V\}) = \inf I(\{V_w| w \in V\}; W)
\end{equation}

We also have the following~\cite{wyner1975common}:
\begin{equation}
\begin{split}
    I(\{V_w| w \in V\};W) &= H(\{V_w| w \in V\}) - H(\{V_w| w \in V\}|W)\\
    \implies I(\{V_w| w \in V\};W) &= H(\{V_w| w \in V\}) - \sum_{w \in V}H(V_w|W)
\end{split}
\end{equation}
where $H$ is the joint entropy of the random variables.

Notice that we can select $W$ such that $P((u,v))$ for all $(u,v) \in E_d$ become independent which also implies that $P_V(w)$ for all $w \in V$ become independent given $W$.
Using Theorem \ref{theorem:probability_distribution}, we have the following:
\begin{equation}
\begin{split}
    &H(V_w|W) \le H(V_w|\{P((u,v)) | (u,v) \in E_d\}) \\
    \implies & H(V_w|W) \le -P_V(w)\log P_V(w)\\
    \implies & H(V_w|W) \le -P_V(w) \log \mathcal{A}(w)\\
    \implies & H(V_w|W) \le -n\mathcal{A}(w)\log \mathcal{A}(w)\\
    \implies & I(\{V_w| w \in V\}) \ge H(\{V_w| w \in V\}) -n\sum_{w \in V} -\mathcal{A}(w)\log \mathcal{A}(w)
\end{split}
\end{equation}
where $\mathcal{A}(w) = \sum_{(u,v) \in E_d} P((u,v))V_s((u,v))(w)$. This implies that minimizing $\sum_{w \in V} -\mathcal{A}(w)\log \mathcal{A}(w)$ will maximize the common information. Note that physically the contribution of the cross terms $\mathcal{X}^i(t)\mathcal{X}^j(t)$ only come into play when two agents are covering the same node at the same time. In sparse settings, the contribution of these cross terms will be small.

%% file: appendix/finegrained_estimator.tex
\section{Fine Grained Estimator}
We now give the proof for Theorem \ref{theorem:close_loop} below.

\begin{proof}
\label{proof:theorem_fine_estimator_path}
    Let us define a path $(\Bar{u}, w_1, w_2, \dots, w_{l-1}, \Bar{v})$ such that $(w_i, w_{i+1}) \in E_d$ for all $0\le i \le l-1$ where $w_0 = \Bar{u}$ and $w_l = \Bar{v}$. As a shorthand, we represent the path as $(\Bar{u}, \dots, \Bar{v})$.

    Let us define a modified edge set $\Bar{E}_d = E_d \cup \{(\Bar{u}, \dots, \Bar{v}\}$ and also have a probability distribution $\Bar{P}((u,v))$ over $\Bar{E}_d$ analogous to the distribution $P((u,v))$ over $E_d$. The only change is that $\Bar{P}((\Bar{u}, \dots, \Bar{v})$ gives the probability that the path is being traversed. We can now define the functions $\Bar{h}^i_{e}(t)$ for all $e \in \Bar{E}_d$. We now define:

    \begin{equation}
        \Bar{P}(e) = \frac{1}{nT}\sum_{i=1}^n \int_0^T \Bar{h}^i_{e}(t) dt \hspace{0.5cm}\forall e \in \Bar{E}_d
    \end{equation}
    and, we will now show that it is indeed a proper distribution.

    Define indicator functions $g^i_{e}(t)$ for all $e \in E_d$ and $ 1\le i \le n$ such that $g^i_{e}(t) = 1$ when agent $i$ is traversing the edge $e$ while the agent is traversing the path $(\Bar{u}, \dots, \Bar{v})$. Obviously, for any edge $e$ that is not in the path, $g^i_{e}(t) = 0$ for all $0 \le t \le T$, $1\le i \le n$. Furthermore,
    \begin{equation}
        h^i_{e}(t) = \Bar{h}^i_{e}(t) + g^i_{e}(t) \hspace{0.5 cm}\forall e \in E_d
    \end{equation}
    \begin{equation*}
    \begin{split}
        \implies& P(e) = \frac{1}{nT}\sum_{i=1}^n \int_0^T \Bar{h}^i_{e}(t) + g^i_{e}(t) \,\,dt \hspace{0.5cm}\forall e \in E_d\\
        \implies& P(e) = \Bar{P}(e) + \frac{1}{nT}\sum_{i=1}^n \int_0^T g^i_{e}(t) \,dt \hspace{0.5cm}\forall e \in E_d\\
        \implies& 1 = \sum_{e \in E_d} \Bar{P}(e) + \frac{1}{nT}\sum_{i=1}^n \int_0^T \sum_{e \in E_d}g^i_{e}(t) \,dt\\
        \implies& 1 = \sum_{e \in E_d} \Bar{P}(e) + \frac{1}{nT}\sum_{i=1}^n \int_0^T \Bar{h}^i_{(\Bar{u},\dots, \Bar{v})}(t) \,dt\\
        \implies& 1 = \sum_{e \in \Bar{E}_d} \Bar{P}(e)
    \end{split}
    \end{equation*}

    Suppose that we define $T_e$ as the time spent on the edge $e \in E_d$ while traversing the path once. It is easy to observe that $\sum_{e \in E_d} T_e = T_{\Bar{u}\Bar{v}}$. We can use the time to get the following relation:
    \begin{equation}
    \begin{split}
        \int_0^T \sum_{i=1}^n g^i_{e}(t) \,dt &= \frac{T_e}{T_{\Bar{u}\Bar{v}}} \int_0^T \sum_{i=1}^n \Bar{h}^i_{(\Bar{u},\dots, \Bar{v})}(t) \,dt \,\, \forall e \in E_d\\
        \implies P(e) &= \Bar{P}(e) + \frac{T_e}{T_{\Bar{u}\Bar{v}}}\Bar{P}((\Bar{u}, \dots, \Bar{v})) \,\, \forall e \in E_d
    \end{split}
    \end{equation}

    Now we show that the loss function over the probabilities on $\Bar{E}_d$ is the same for the probabilities on $E_d$. It is easy to observe that the loss will be the same if we can show that $\sum_{e \in E_d} V_s(e)P(e) = \sum_{e \in \Bar{E}_d} V_s(e)\Bar{P}(e)$. Then, by using Lemma \ref{lemma:visibility_linearity}, we have,

    \begin{equation*}
        \begin{split}
            \sum_{e \in \Bar{E}_d} V_s(e)\Bar{P}(e) &= \sum_{e \in E_d} V_s(e)\Bar{P}(e)
            + V_s((\Bar{u}, \dots, \Bar{v}))\Bar{P}((\Bar{u}, \dots, \Bar{v}))\\
            &= \sum_{e \in E_d} V_s(e)\Bar{P}(e) + \Bar{P}((\Bar{u}, \dots, \Bar{v}))\sum_{e \in E_d} \frac{T_e}{T_{\Bar{u}\Bar{v}}}V_s(e)\\
            &=\sum_{e \in E_d} V_s(e)\left(\Bar{P}(e) + \frac{T_e}{T_{\Bar{u}\Bar{v}}}\Bar{P}((\Bar{u}, \dots, \Bar{v}))\right)\\
            &=\sum_{e \in E_d} V_s(e) P(e)
        \end{split}
    \end{equation*}
\end{proof}

The following is a proof for Theorem \ref{theorem:split_flow}.
\begin{proof}
    \label{proof:theorem_split_flow}
    First we need to show that $\Bar{P}$ and $\Hat{P}$ form a probability distribution over the edges. Let $q=1-p$. Notice that $\Bar{h}^i_{(u,v)}(t) = I(t)h^i_{(u,v)}(t)$ for all $ 0 \le t \le T$ and $(u,v) \in E_d$.

    \begin{equation}
        \begin{split}
            \implies \Bar{P}((u,v)) &= \frac{1}{npT}\int_0^T I(t)h^i_{(u,v)}(t) dt\\
            \implies \sum_{(u,v) \in E_d} \Bar{P}((u,v)) &= \frac{1}{npT}\sum_{i=1}^n\int_0^T I(t) dt\\
            \implies \sum_{(u,v) \in E_d} \Bar{P}((u,v)) &= 1
        \end{split}
    \end{equation}
    
    From the way $\Bar{P}$ and $\Hat{P}$ is defined, it is easy to observe that,
    \begin{equation}
    \label{eq:proof_edge_prob_linearity}
        \begin{split}
            P((u,v)) &= p \Bar{P}((u,v)) + q \Hat{P}((u,v)) \hspace{0.5cm} \forall (u,v) \in E_d\\
            \implies 1 &= p + q \sum_{(u,v) \in E_d} \Hat{P}((u,v))\\
            \implies \Hat{P}((u,v)) &= 1
        \end{split}
    \end{equation}
    Hence, we have that both are probability distributions over the edges. Therefore, we can way that it is possible to decompose $\Bar{P}((u,v))$ into $\Bar{\pi}(u)$ and $\Bar{P}(u \rightarrow v)$, and $\Hat{P}((u,v))$ into $\Hat{\pi}(u)$ and $\Hat{P}(u \rightarrow v)$.
    
    For any $u \in V_d$, we have the following:
    \begin{equation}
    \label{eq:proof_stable_state_linearity}
        \begin{split}
            \implies \pi(u) &= \sum_{\{v|(u,v) \in E_d\}} p\Bar{P}((u,v)) + q\Hat{P}((u,v))\\
            \implies \pi(u) &= p \Bar{\pi}(u) + q \Hat{\pi}(u)
        \end{split}
    \end{equation}

    Let us define the matrix $M$ of size $|V_d| \times |V_d|$ and a vector $x$ of size $|V_d|$ which is indexed by the vertices in $V_d$ such that:
    \begin{equation}
        \begin{split}
            M(v, u) &= \begin{cases}
                P(u \rightarrow v) & (u,v) \in E_d\\
                0 & otherwise
            \end{cases}\\
            x(u) &= \pi(u)
        \end{split}
    \end{equation}
    Similarly, let us also define the following:
    \begin{equation}
        \begin{split}
            \Bar{M}(v, u) &= \begin{cases}
                \Bar{P}(u \rightarrow v) & (u,v) \in E_d\\
                0 & otherwise
            \end{cases} \hspace{1cm}
            \Bar{x}(u) = \Bar{\pi}(u)\\
            \Hat{M}(v, u) &= \begin{cases}
                \Hat{P}(u \rightarrow v) & (u,v) \in E_d\\
                0 & otherwise
            \end{cases}\hspace{1cm}
            \Hat{x}(u) = \Hat{\pi}(u)
        \end{split}
    \end{equation}

    Using equation \ref{eq:proof_stable_state_linearity}, we have that $x = p\Bar{x} + q\Hat{x}$, and equation \ref{eq:proof_edge_prob_linearity} implies that $Mx = p\Bar{M}\Bar{x} + q\Hat{M}\Hat{x}$.
    \begin{equation}
        \begin{split}
            \implies x - Mx &= p(\Bar{x} -  \Bar{M}\Bar{x}) + q(\Hat{x} -  \Hat{M}\Hat{x})\\
            \implies (I-M)x &= p(I-\Bar{M})\Bar{x} + q(I-\Hat{M})\Hat{x}
        \end{split}
    \end{equation}
    Hence, it is sufficient to constraint that $\Bar{M}\Bar{x} = \Bar{x}$ and $\Hat{M}\Hat{x} = \Hat{x}$ which is same as saying:
    \begin{equation*}
            \sum_{\{v|(u,v) \in E_d\}} \Hat{\pi}(u)\Hat{P}(u \rightarrow v) = \Hat{\pi}(v) \text{\hspace{0.3cm} and \hspace{0.3cm}}
            \sum_{\{v|(u,v) \in E_d\}} \Bar{\pi}(u)\Bar{P}(u \rightarrow v) = \Bar{\pi}(v)            
    \end{equation*}
    \todo{Add the countable partition proof?}
\end{proof}

%% file: appendix/large_environment.tex
\section{Further Analysis of Small Environment}
\label{app:further_analysis}

We conduct further analysis by calculating the inter-cover time for nodes, the minimum cover frequency, and the standard deviation of cover frequency over different nodes. It should be noted that all these metrics have their own set of problems. These metrics are a good measure for static coverage; however, they face multiple issues for the dynamic nature of coverage in this work. Another nuance to consider is that we are not explicitly creating paths. Some methods create closed loop paths where the metrics make sense with the proposed algorithm. In contrast, our objective is to evaluate how good the learned distribution is.

To illustrate the issue, we calculate the variance of these metrics over seeds and nodes. Following table shows the various metrics for the small environment:

\begin{table*}[ht]
\centering

\begin{tabular}{lcccc}
\hline
\textbf{Method} & 
\textbf{Mean Seeds} & 
\textbf{Std Seeds} & 
\textbf{Mean Seeds} & 
\textbf{Std Seeds} \\

\textbf{} & 
\textbf{Mean Nodes} & 
\textbf{Mean Nodes} & 
\textbf{Std Nodes} & 
\textbf{Std Nodes} \\

\hline

Random Agent 
& 23.7541 & 0.0835 & 124.6502 & 7.1879 \\

Sample Agent 
& 39.9655 & 0.5161 & 265.0988 & 63.2053 \\

Comm Sample Agent 
& 48.9734 & 0.4433 & 289.5213 & 79.1258 \\

Frontier Agent 
& 12.4113 & 0.0367 & 97.9757 & 4.2560 \\

Comm Frontier Agent 
& 12.3572 & 0.0282 & 101.1581 & 3.0090 \\

Optimal Agent 
& 157.6597 & 0.0167 & 68.3205 & 0.6030 \\

OptRandom Agent 
& 12.5551 & 0.0063 & 12.0341 & 0.0119 \\

\hline
\end{tabular}

\vspace{0.5cm}

\begin{tabular}{lcc}
\hline
\textbf{Method} & 
\textbf{Min Cover Freq (Mean $\pm$ Std)} & 
\textbf{Avg Node Cover Std} \\
\hline

Random Agent 
& $449.4 \pm 125.13$ & 3041.19 \\

Sample Agent 
& $286.7 \pm 115.96$ & 11330.91 \\

Comm Sample Agent 
& $216.6 \pm 120.81$ & 11551.92 \\

Frontier Agent 
& $2328.3 \pm 63.81$ & 1722.55 \\

Comm Frontier Agent 
& $2345.0 \pm 111.82$ & 1700.66 \\

Optimal Agent 
& $56.8 \pm 5.44$ & 12901.76 \\

OptRandom Agent 
& $964.9 \pm 27.08$ & 2143.25 \\

\hline
\end{tabular}

\caption{Comparison of agent methods on inter-cover time and coverage uniformity metrics for the Small environment. The first block reports inter-cover time statistics across seeds and nodes (mean and standard deviation of node-wise means and standard deviations), while the second block summarizes coverage performance via the minimum cover frequency across nodes (mean $\pm$ standard deviation over seeds) and the average standard deviation of node visit frequencies. Lower inter-cover times indicate faster revisitation, whereas lower variability metrics reflect more uniform coverage.}
\label{tab:cover_metrics_split_rows}

\end{table*}

For the inter-cover metric, the mean over seeds and nodes can be misleading because our objective is to have uniform coverage. The standard deviation along nodes tells how far the method is deviating from uniform cover over nodes. An interesting observation is that uniform cover affects minimum frequency at which a node is covered. This trade-off might be of interest to those who want to optimize both unifrom coverage and visitation frequency.

\section{Large Environment Experiments}
\label{app:large_environment}
The large environment is a $70 \times 70$ grid-like structure with an uneven but smooth topology shown in Figure \ref{fig:large_environment}. Two stark differences from the small environment are, 1) there are no walls that suddenly clip the visibility, and 2) the topology does not restrict the movement of agents. We do simulations for homogeneous teams with $30$ agents with unidirectional visibility.

Regarding scalability, the computational bottleneck of BOIL is the iterative PageRank-style update, which scales linearly with the number of edges $O(|E|)$. This allows the method to remain tractable even as the environment size $|V_d|$ grows, provided the graph connectivity remains sparse (as is typical in topographic maps).

\begin{figure}[H]
    {\includegraphics[scale=0.4]{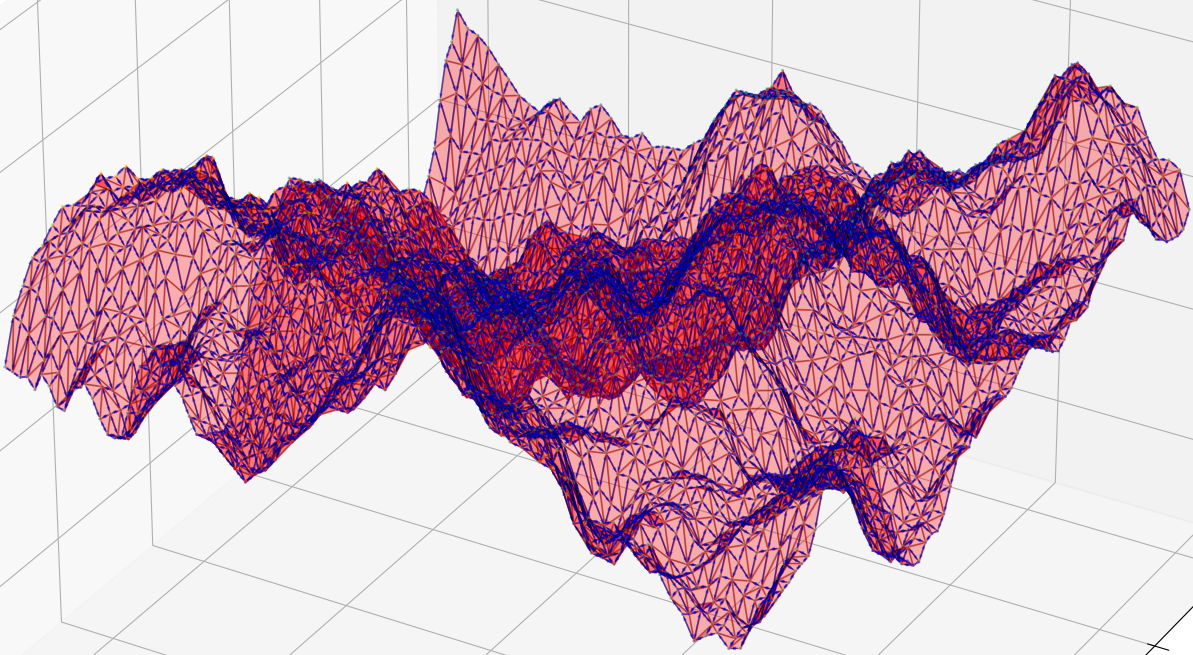}}
    {\caption{The large environment has a variety of features that are observed in real world topographies. It has multiple different hills and valleys resulting in a complex situation to analyze even for human experts.}\label{fig:large_environment}}
\end{figure}

Figure \ref{fig:large_environment_vis_counts} shows the cumulative counts and Figure \ref{fig:large_environment_convergence} shows the total variation distance of the distribution approximated by the trajectories and the distribution found using BOIL process.

\begin{figure}[H]
    \centering
    \includegraphics[scale=0.29]{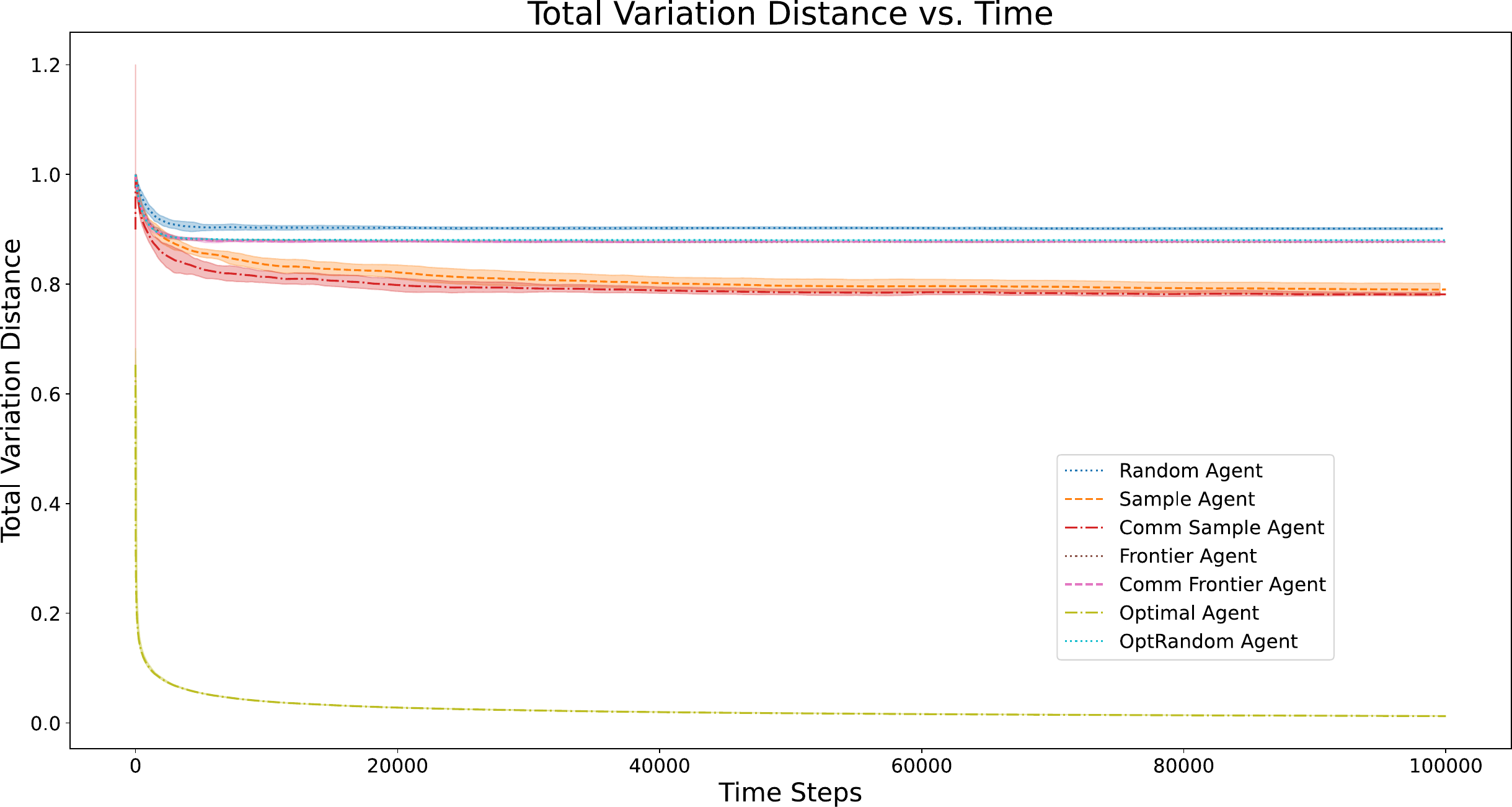}
    \caption{The figure depicts Time steps vs. Total variation distance between the approximate distribution of the agent trajectories and the distribution found using the BOIL process for the large environment. \textbf{BOIL Distribution Strategies:} \textit{Sample Agent, Comm Sample Agent, Optimal Agent}}
    \label{fig:large_environment_convergence}
\end{figure}

Let us initially examine Figure \ref{fig:large_environment_convergence} to assess the convergence rate. It becomes evident that random walks and frontier-based exploration strategies converge to a similar distance value. Conversely, Sample and Comm Sample strategies struggle to achieve convergence. This disparity is further underscored by the visibility counts depicted in Figure \ref{fig:large_environment_vis_counts}. The challenge of attaining convergence to the desired distribution becomes apparent as the environment's scale and complexity increase. However, the visibility counts also suggest that coverage remains satisfactory once agents reach the distribution obtained through the BOIL process. Thus, despite the inherent difficulty in achieving convergence, the obtained distribution is effective for ensuring adequate coverage.

\begin{figure}[H]
    {\includegraphics[scale=0.25]{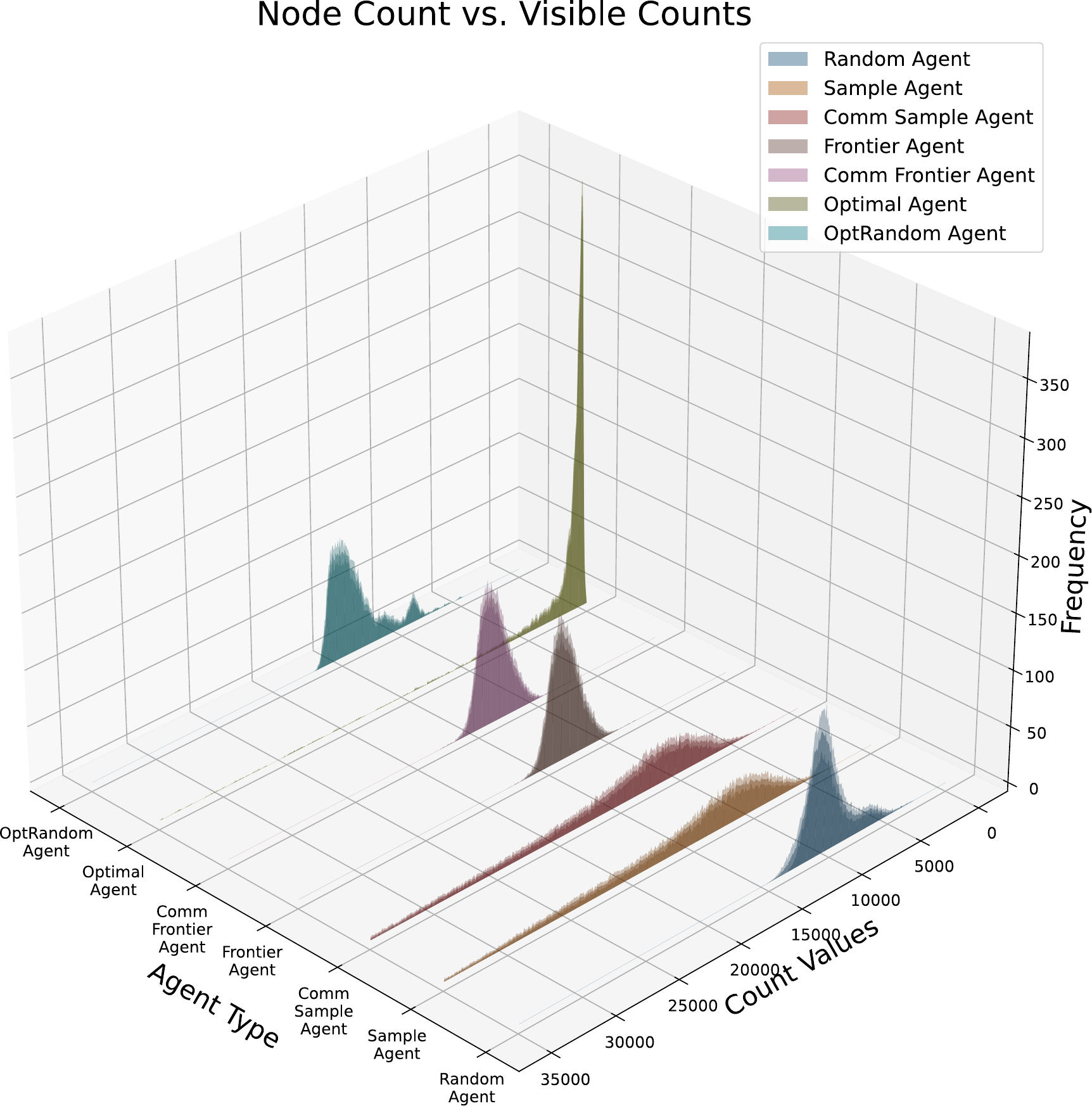}}
    {\caption{Similar to Figure \ref{fig:small_environment_vis_counts}, this figure shows the same type of data for the large environment. Z-axis illustrates the counts of visible nodes corresponding to values on the Y-axis, while the X-axis denotes the types of homogeneous agents considered in the experiment. Curves' opacity signifies variations in values, with lower opacity indicating the upper end of the variance and higher opacity representing the lower end for frequencies. The central divider indicates the mean frequencies. \textbf{BOIL Distribution Strategies:} \textit{Sample Agent, Comm Sample Agent, Optimal Agent}}\label{fig:large_environment_vis_counts}}
\end{figure}

It should be noted that despite the apparent plateauing of convergence, closer inspection of the raw values reveals a downward gradient for both the small and large environments for Sample and Comm Sample strategies. This observation aligns with theoretical expectations, as the sampling process theoretically guarantees convergence to the desired distribution. Furthermore, bidirectional paths in the environment may contribute to oscillatory behavior, given the simplicity of the path-planning process.

A closer look at Figure \ref{fig:large_environment_vis_counts} suggests that the distributions of Sample and Comm Sample agents exhibit a skewed peak resembling that of Optimal agents. This indicates that despite encountering challenges in achieving full convergence, these strategies strive to emulate the optimal distribution pattern.

Table below provides similar inter-cover and minimum cover frequency related metrics for the large environment.

\begin{table*}[ht]
\centering

\begin{tabular}{lcccc}
\hline
\textbf{Method} & 
\textbf{Mean Seeds} & 
\textbf{Std Seeds} & 
\textbf{Mean Seeds} & 
\textbf{Std Seeds} \\

\textbf{} & 
\textbf{Mean Nodes} & 
\textbf{Mean Nodes} & 
\textbf{Std Nodes} & 
\textbf{Std Nodes} \\

\hline

Random Agent 
& 10.5195 & 0.0072 & 44.9495 & 0.6216 \\

Sample Agent 
& 7.7228 & 0.0464 & 50.9775 & 1.9425 \\

Comm Sample Agent 
& 7.9123 & 0.0265 & 53.5128 & 1.7902 \\

Frontier Agent 
& 7.8568 & 0.0081 & 39.2050 & 0.4781 \\

Comm Frontier Agent 
& 7.8268 & 0.0067 & 42.5300 & 0.7787 \\

Optimal Agent 
& 119.2871 & 0.0279 & 77.6937 & 0.1387 \\

OptRandom Agent 
& 7.8132 & 0.0029 & 7.3050 & 0.0036 \\

\hline
\end{tabular}

\vspace{0.5cm}

\begin{tabular}{lcc}
\hline
\textbf{Method} & 
\textbf{Min Cover Freq (Mean $\pm$ Std)} & 
\textbf{Avg Node Cover Std} \\
\hline

Random Agent 
& $3048.7 \pm 163.81$ & 1550.97 \\

Sample Agent 
& $3446.1 \pm 352.05$ & 8167.65 \\

Comm Sample Agent 
& $3206.7 \pm 606.49$ & 8699.91 \\

Frontier Agent 
& $7804.2 \pm 234.52$ & 1051.95 \\

Comm Frontier Agent 
& $7745.2 \pm 218.68$ & 1047.37 \\

Optimal Agent 
& $105.1 \pm 8.76$ & 12400.52 \\

OptRandom Agent 
& $4898.4 \pm 72.15$ & 1740.42 \\

\hline
\end{tabular}

\caption{Comparison of agent methods on inter-cover time and coverage uniformity metrics for the Large environment. The first block reports inter-cover time statistics across seeds and nodes (mean and standard deviation of node-wise means and standard deviations), while the second block summarizes coverage performance via the minimum cover frequency across nodes (mean $\pm$ standard deviation over seeds) and the average standard deviation of node visit frequencies. Lower inter-cover times indicate faster revisitation, whereas lower variability metrics reflect more uniform coverage.}
\label{tab:cover_metrics_split_rows_large}

\end{table*}